\title{Harnessing the Power of Choices in Decision Tree Learning}
\newcommand*\samethanks[1][\value{footnote}]{\footnotemark[#1]}
\author{%
  Guy Blanc\thanks{Authors ordered alphabetically.} \vspace{3pt} \\
  {\sl Stanford } \vspace{2pt} \\
  \texttt{gblanc@stanford.edu} \\
  % examples of more authors
  \And
  Jane Lange\samethanks \vspace{3pt} \\
  {\sl MIT} \vspace{2pt} \\
  \texttt{jlange@mit.edu} \\
  \And
  Chirag Pabbaraju\samethanks \vspace{3pt}\\
  {\sl Stanford} \vspace{2pt} \\
  \texttt{cpabbara@stanford.edu} \\
  \And 
  Colin Sullivan\samethanks \vspace{3pt}\\
  {\sl Stanford} \vspace{2pt} \\
  \texttt{colins26@stanford.edu} \\
  \And
  Li-Yang Tan\samethanks \vspace{3pt} \\
  {\sl Stanford}  \vspace{2pt}\\
  \texttt{lytan@stanford.edu} \\
  \And
  Mo Tiwari\samethanks \vspace{3pt} \\
  {\sl Stanford} \vspace{2pt} \\
  \texttt{motiwari@stanford.edu} \\
}
\newcommand{\topk}{\text{Top-}k}
\newcommand{\Top}[1]{\text{Top-}#1}
\newcommand{\maj}{\textnormal{Maj}}
\newcommand{\tribes}{\textnormal{Tribes}}
\def\colorful{0}
\newcommand{\violet}[1]{{\color{violet}{#1}}}
\newcommand{\gray}[1]{{\color{gray}{#1}}}
\newcommand{\violet}[1]{{{#1}}}
\newcommand{\gray}[1]{{{#1}}}
\renewcommand{\gray}[1]{}
\begin{document}

\maketitle

\newcommand{\lnote}[1]{\footnote{{\bf \color{blue}Li-Yang}: {#1}}}
\newcommand{\jnote}[1]{\footnote{{\bf \color{red}Jane}: {#1}}}
\newcommand{\cnote}[1]{\footnote{{\bf \color{green}Chirag}: {#1}}}
\newcommand{\gnote}[1]{\footnote{{\bf \color{violet}Guy}: {#1}}}
\begin{abstract}
We propose a simple generalization of standard and empirically successful decision tree learning algorithms such as ID3, C4.5, and CART.   These algorithms, which have been central to machine learning for decades, are greedy in nature: they grow a decision tree by iteratively splitting on the best attribute.  Our algorithm, Top-$k$, considers the $k$ best attributes as possible splits instead of just the single best attribute.% (i.e.~the special case of Top-$1$ captures ID3, C4.5, and CART). 

We demonstrate, theoretically and empirically, the power of this simple generalization.  We first prove a {\sl greediness hierarchy theorem} showing that for every $k\in \N$, Top-$(k+1)$  can be dramatically more powerful than Top-$k$: there are data distributions for which the former achieves accuracy $1-\eps$, whereas the latter only achieves accuracy $\frac1{2}+\eps$.  We then show, through extensive experiments, that Top-$k$ outperforms the two main approaches to decision tree learning: classic greedy algorithms and more recent ``optimal decision tree'' algorithms.  On one hand, Top-$k$ consistently enjoys significant accuracy gains over greedy algorithms across a wide range of benchmarks.  On the other hand, Top-$k$ is markedly more scalable than optimal decision tree algorithms and is able to handle dataset and feature set sizes that remain far beyond the reach of these algorithms.  

%\begin{sloppypar}
%The code to reproduce our results is available at 
%All of our results are reproducible via a one-line script available online at the following link: 
The code to reproduce our results: \texttt{https://github.com/SullivanC19/pydl8.5-topk}.
% \end{sloppypar}
\end{abstract}

%\lnote{Final check: Standardize Top-$k$ as name of algorithm, $h$ has depth of tree instead of $\rho$, theorem statements in the intro synced up with those in the body} 
\section{Introduction}

Decision trees are a fundamental workhorse in machine learning.  Their logical and hierarchical structure makes them easy to understand and their predictions easy to explain.  Decision trees are therefore the most canonical example of an interpretable model: in his influential survey~\citep{Bre01TwoCultures}, Breiman writes ``On interpretability, trees rate an A+''; much more recently, the survey~\cite{RCCHSZ22} lists decision tree optimization as the very first of 10 grand challenges for the field of interpretable machine learning.  Decision trees are also central to modern ensemble methods such as random forests~\citep{Bre01} and XGBoost \citep{CG16}, which achieve state-of-the-art accuracy for a wide range of tasks.

Greedy algorithms such as ID3~\citep{Qui86}, C4.5~\citep{Qui93}, and CART~\citep{BFSO84} have long been the standard approach to decision tree learning. These algorithms build a decision tree from labeled data in a top-down manner, growing the tree by iteratively splitting on the ``best'' attribute as measured with respect to a certain heuristic function (e.g., information gain).  Owing to their simplicity, these algorithms are highly efficient and scale gracefully to handle massive datasets and feature set sizes, and they continue to be widely employed in practice and enjoy significant empirical success. For the same reasons, these algorithms are also part of the standard curriculum in introductory machine learning and data science courses.  

The trees produced by these greedy algorithms are often reasonably accurate, but can nevertheless be suboptimal. There has therefore been a separate line of work, which we review in~\Cref{related},  on algorithms that optimize for accuracy and seek to produce {\sl optimally} accurate decision trees. These algorithms employ a variety of optimization techniques (including dynamic programming, integer programming, and SAT solvers) and are completely different from the simple greedy algorithms discussed above.  Since the problem of finding an optimal decision tree has long been known to be NP-hard~\citep{HR76}, {\sl any} algorithm must suffer from the inherent combinatorial explosion when the instance size becomes sufficiently large (unless P$=$NP).  Therefore, while this line of work has made great strides in improving the scalability of algorithms for optimal decision trees, dataset and feature set sizes in the high hundreds and thousands remain out of reach. 

This state of affairs raises a natural question: 
\begin{quote} 
{\sl Can we design decision tree learning algorithms that improve significantly on the accuracy of classic greedy algorithms and yet inherit their simplicity and scalability?}
\end{quote} 
In this work, we propose a new approach and make a case that provides a strong affirmative answer to the question above. Our work also opens up several new avenues for exploration in both the theory and practice of decision tree learning.  

\subsection{Our contributions} 

\subsubsection{Top-$k$: a simple and effective generalization of classic greedy decision tree algorithms}

%\paragraph{This work: A simple and effective generalization of classical decision tree algorithms}

We introduce an easily interpretable greediness parameter to the class of all greedy decision tree algorithms, a broad class that encompasses ID3, C4.5, and CART. This parameter, $k$, represents the number of features that the algorithm considers as candidate splits at each step. Setting $k=1$ recovers the fully greedy classical approaches, and increasing $k$ allows the practitioner to produce more accurate trees at the cost of only a mild training slowdown.  The focus of our work is on the regime where~$k$ is a small constant---preserving the efficiency and scalability of greedy algorithms is a primary objective of our work---although we mention here that by setting $k$ to be the dimension $d$, our algorithm produces an optimal tree.  Our overall framework can thus be viewed as interpolating between greedy algorithms at one extreme and ``optimal decision tree'' algorithms at the other, precisely the two main and previously disparate approaches to decision tree learning discussed above. 

We will now describe our framework.  A {\sl feature scoring function} $\mathcal{H}$ takes as input a dataset over $d$ binary features and a specific feature $i\in [d]$, and returns a value quantifying the ``desirability'' of this feature as the root of the tree.  The greedy algorithm corresponding to $\mathcal{H}$ selects as the root of the tree the feature that has the largest score under $\mathcal{H}$; our generalization will instead consider the $k$ features with the $k$ highest scores.  

\begin{definition}[Feature scoring function] A feature scoring function $\mathcal{H}$ takes as input a labeled dataset $S$ over a $d$-dimensional feature space, a feature $i\in [d]$, and returns a score $\nu_i \in [0,1]$.
\end{definition} 

See~\Cref{subsec:impurity} for a discussion of the feature scoring functions that correspond to standard greedy algorithms ID3, C4.5, and CART.  %For concreteness, we will focus on the feature scoring function of ID3\lnote{To what extent do we focus on ID3?}\gnote{I think all experiments are ID3? Theoretical results hold for all impurity based feature scoring functions, (though one could imagine feature scoring functions that are not impurity-based)}, though as mentioned, our framework is general and can be instantiated with any greedy decision tree algorithm. 
Pseudocode for $\topk$ is provided in \Cref{fig:pseudocode}. We note that from the perspective of interpretability, the trained model looks the same regardless of what $k$ is. During training, the algorithm considers more splits, but only one split is eventually used at each node.

\begin{figure}[ht]
%   \captionsetup{width=.9\linewidth}
\begin{tcolorbox}[colback = white,arc=1mm, boxrule=0.25mm]
\vspace{3pt}
$\textnormal{Top-}k(\mathcal{H},S, h)$:
\begin{itemize}[align=left]
	\item[\textbf{Given:}] A feature scoring function $\mathcal{H}$, a labeled sample set $S$ over $d$ dimensions, and depth budget $h$.
	\item[\textbf{Output:}] Decision tree of depth $h$ that approximately fits $S$. 
\end{itemize}
\begin{enumerate}
	\item If $h = 0$, or if every point in $S$ has the same label, return the constant function with the best accuracy w.r.t.~$S$.
	\item Otherwise, let $\mathcal{I}\sse [d]$ be the set of $k$ coordinates maximizing $\mathcal{H}(S, i)$.
	\item For each $i \in \mathcal{I}$, let $T_i$ be the tree with 
	\begin{align*} 
	\text{Root} &= x_i \\ 
	\text{Left subtree} &= \textnormal{Top-}k(\mathcal{H},S_{x_i = 0},h - 1) \\ 
	\text{Right subtree} &= \textnormal{Top-}k(\mathcal{H},S_{x_i = 1},h - 1), 
	\end{align*}
	where $S_{x_i = b}$ is the subset of points in $S$ where $x_i = b$.
	\item Return the $T_i$ with maximal accuracy with respect to $S$ among all choices of $i \in \mathcal{I}$.
\end{enumerate}
\end{tcolorbox}
\caption{The Top-$k$ algorithm. It can be instantiated with any feature scoring function $\mathcal{H}$, and when $k = 1$, recovers standard greedy algorithms such as ID3, C4.5, and CART. 
}
\label{fig:pseudocode}
\end{figure}

\subsubsection{Theoretical results on the power of Top-$k$}

The search space of Top-$(k+1)$ is larger than that of Top-$k$, and therefore its training accuracy is certainly at least as high.  The first question we consider is: is the test accuracy of Top-$(k+1)$ only marginally better than that of Top-$k$, or are there examples of data distributions for which even a single additional choice provably leads to huge gains in test accuracy?   Our first main theoretical result is a sharp {\sl greediness hierarchy theorem}, showing that this parameter can have dramatic impacts on accuracy, thereby illustrating its power: %\lnote{Changed $\rho$ to $h$ for depth/height of tree in the intro. To do: make this consistent throughout.} 
%\cnote{write Theorem 1 also wrt sample?}\gnote{Good point, changed}
\begin{theorem}[Greediness hierarchy theorem]
    \label{thm:k-hierarchy} For every $\eps > 0$, $k,h \in \N$, there is a data distribution $\mcD$ and sample size $n$ for which, with high probability over a random sample $\bS \sim \mcD^n$, \textnormal{Top}-$(k+1)$ achieves at least $1-\eps$ accuracy with a depth budget of $h$, but \textnormal{Top}-$k$ achieves at most $\frac1{2}+\eps$ accuracy with a depth budget of $h$.
\end{theorem}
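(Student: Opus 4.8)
The plan is to exhibit a single hard distribution built around the parity function, exploiting the fact that impurity-based scoring heuristics are ``blind'' to parity: if the label is the parity of $h$ relevant coordinates, then every individual relevant coordinate has zero correlation with the label and hence zero score under any standard $\mathcal{H}$ (information gain, gain ratio, Gini). Concretely, I would take $d = k+h$ features, designate $h$ of them $z_1,\dots,z_h$ as \emph{relevant} and the other $k$ of them $w_1,\dots,w_k$ as \emph{distractors}, and define $\mcD$ by drawing $z \sim \mathrm{Unif}(\{0,1\}^h)$, setting the label $y = z_1 \oplus \cdots \oplus z_h$, and letting each distractor $w_i$ be an independent weak copy of the label with $\Pr[w_i = y] = \tfrac12 + \delta$, for a small parameter $\delta = \delta(\eps,k,h)$ fixed at the very end.

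The two key score computations are: (i) each $z_j$ has population score exactly $0$, since conditioning on a single parity bit leaves $y$ uniform; and (ii) each $w_i$ has strictly positive score $\approx \delta^2$. Crucially, I would check these rankings are \emph{stable} under the conditioning the algorithms perform. Conditioning on any subset of distractors reweights $z$ only through $y(z)=\bigoplus_j z_j$, which by symmetry keeps every $z_j$ uncorrelated with $y$ (score still $0$) while the remaining distractors stay positively correlated; and conditioning on any relevant coordinate $z_j$ simply reduces the instance to the same type with $h-1$ relevant bits. Thus at \emph{every} node reachable by either algorithm, the $k$ distractors occupy the top $k$ score slots and the as-yet-unsplit relevant coordinates sit just below them, tied at score $0$.

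Given this, Top-$(k{+}1)$ succeeds: at each node its $(k{+}1)$-st candidate is a fresh relevant coordinate, so among the subtrees it builds is the full depth-$h$ parity tree, which splits on all of $z_1,\dots,z_h$ along every path and computes $y$ exactly. Top-$k$, on the other hand, is forced to waste its root on a distractor, since at the root all $h$ relevant coordinates are ranked $k{+}1$ or worse. The heart of the argument is then a counting bound: although a relevant coordinate \emph{can} enter the top $k$ at deeper nodes once distractors are ``used up'' (after $s$ distractor-splits on a path, exactly $s$ relevant coordinates appear in the top $k$), along any root-to-leaf path of length $h$ at least one split must be a distractor (the root), so at most $h-1$ distinct relevant coordinates are ever split on that path. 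Hence the parity is never fully resolved at any leaf: conditioned on the observed distractors and a strict subset of the relevant bits, predicting $y$ reduces to predicting a residual balanced parity from the distractors alone.

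Finishing requires two quantitative choices. I would first fix $\delta$ small enough (as a function of $k,\eps$) that the Bayes-optimal predictor of $y$ from all $k$ distractors has accuracy at most $\tfrac12 + \tfrac{\eps}{2}$; by the reduction above this bounds the population accuracy of \emph{every} tree Top-$k$ can produce by $\tfrac12 + \tfrac\eps2$. I would then take the sample size $n$ large enough (polynomial in $2^h, 1/\delta, 1/\eps$) that, by Hoeffding and a union bound over the $O((k{+}1)^h)$ reachable nodes and candidate trees, the empirical scores preserve the population ranking at every node and every candidate tree's empirical accuracy is within $\tfrac\eps2$ of its population accuracy. The main obstacle is precisely the counting step for Top-$k$: one must rule out that Top-$k$ recovers the parity by cleverly interleaving distractor- and relevant-splits once relevant coordinates become eligible, and the clean resolution is that the forced distractor at the root permanently consumes one of the $h$ levels needed to cover all parity bits.
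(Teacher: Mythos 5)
Your proposal is correct and follows the same overall strategy as the paper's proof (\Cref{subsec:nonmonotone} together with \Cref{sec:theory-appendix}): hide an $h$-wise parity behind $k$ ``distractor'' features whose impurity scores dominate those of the parity variables, observe that Top-$(k+1)$ always has a fresh parity variable among its $k+1$ candidates because there are only $k$ distractors in total, and conclude via the same counting argument that Top-$k$ must burn at least one of its $h$ levels on a distractor, leaving the parity unresolved at every leaf. The one genuine difference is where you place the randomness. The paper keeps all $d$ features independent and uniform and makes the \emph{label} stochastic ($\mathrm{Par}_h(x^{(1)})$ with probability $1-\eps$, a uniformly random distractor coordinate with probability $\eps$); you keep the label deterministic and make the \emph{distractor features} noisy copies of it. This matters for one technical step: with a uniform feature marginal the paper can invoke \Cref{fact:score-correlation-monotone} to convert ``largest score'' into ``largest correlation'' for every impurity-based heuristic at once, whereas your distractors are correlated with one another and with the parity block, so that fact does not apply off the shelf. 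You compensate by computing scores directly, and the computation does go through: conditioning on a fresh parity bit never moves the conditional label mean (so its score is exactly $0$ under any $\mathcal{G}$, provided at least two parity bits remain unqueried — which your depth bookkeeping guarantees at every internal node), while a fresh distractor splits the label mean into $p \pm \Theta(\delta)$. Note, however, that the resulting distractor score is strictly positive only when $\mathcal{G}$ is strictly concave near the relevant conditional means; \Cref{def:impurity} requires only concavity, so a degenerate piecewise-linear $\mathcal{G}$ would leave your distractors tied at score $0$ with the parity bits. This is a minor caveat (entropy, Gini, and $2\sqrt{p(1-p)}$ are all strictly concave, and the paper's own argument has an analogous non-strictness issue), as is the shared edge case $h=1$, where $\mathrm{Par}_1$ is a single coordinate and becomes the best-scoring feature in both constructions. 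The remainder of your argument — the stability of the ranking under conditioning, the ``after $s$ distractor splits exactly $s$ parity bits enter the top $k$'' bookkeeping with the root forced to be a distractor, the Bayes bound of $\frac{1}{2}+O(k\delta)$ for predicting a balanced residual parity from the distractors alone, and the finite-sample union bound over reachable nodes and candidate trees — checks out.
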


All of our theoretical results, \Cref{thm:k-hierarchy,thm:k-hierarchy-general,thm:monotone-hierarchy-intro}, hold whenever the scoring function is an \emph{impurity-based heuristic}. This broad class includes the most popular scoring functions (see \Cref{subsec:impurity} for more details).
%\gnote{I added these two sentences. A reviewer brought up a good point that we need to specify the scoring function, so hopefully this addresses that.}
\Cref{thm:k-hierarchy} is a special case of a more general result that we show: for all $k  < K$, there are data distributions on which Top-$K$ achieves maximal accuracy gains over Top-$k$, even if Top-$k$ is allowed a larger depth budget:  

\begin{theorem}[Generalization of~\Cref{thm:k-hierarchy}]
\label{thm:k-hierarchy-general} 
For every $\eps > 0$, $k,K,h\in \N$ where $k < K$, there is a data distribution $\mcD$ and sample size $n$ for which, with high probability over a random sample $\bS \sim \mcD^n$, \textnormal{Top}-$K$ achieves at least $1-\eps$ accuracy with a depth budget of $h$, but \textnormal{Top}-$k$ achieves at most $\frac1{2}+\eps$ accuracy even with a depth budget of $h + (K-k-1)$.%\lnote{Check that this is indeed the right bound.} 
\end{theorem}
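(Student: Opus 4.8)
The plan is to exhibit a single distribution $\mcD$ over binary features of three kinds: a block of $K-1$ \emph{decoy} features, one \emph{portal} feature $g$ that gates a recursive \emph{good} sub-structure of depth $h$, and a pool of irrelevant \emph{filler} features. I would engineer the marginal statistics so that, at the root, every impurity-based heuristic assigns the $K-1$ decoys strictly higher scores than $g$, and assigns $g$ a strictly positive score exceeding that of every filler. Thus $g$ sits at rank exactly $K$: inside the candidate set $\mathcal{I}$ of $\Top{K}$ but outside that of $\topk$. To make this robust across \emph{all} impurity-based heuristics at once (as the statement requires), I would not fix a fine-grained ordering separately for each concave impurity function; instead I would invoke a monotonicity principle: for symmetric binary splits the impurity gain is a monotone function of the feature's correlation with the label, with independent features having gain exactly $0$. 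Ordering features by correlation then yields an ordering that every impurity-based heuristic respects, reducing the whole design to choosing correlations $c_{\mathrm{decoy}} > c_g > 0$ for decoys and portal and correlation $0$ for fillers.

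For the $\Top{K}$ accuracy bound, since $g$ is among the top $K$ candidates at the root, $\Top{K}$ explores the tree that splits on $g$. I would design the good sub-structure so that conditioning on $g$ and recursing on further good variables drives the label to near-determinism, so that splitting on $g$ followed by an optimal depth-$(h-1)$ completion reaches accuracy $1-\eps$. Because Step 4 returns the explored subtree of maximal accuracy, $\Top{K}$ inherits this accuracy.

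The heart of the argument is the $\topk$ lower bound, where I would isolate a \emph{rank-decrease} mechanism. The decoys should be mutually (approximately) independent and each independently correlated with the label via a product-style law, so that splitting on any one decoy makes \emph{that} decoy constant -- hence gain $0$ -- in both children while (approximately) preserving the correlations of the remaining decoys and the correlation $c_g$ of the portal. Each decoy split therefore removes exactly one feature from the set ranked above $g$, so $g$'s rank drops by exactly one per split. Since $\topk$'s candidate set consists of the top $k\le K-1$ features, which are all decoys, it is \emph{forced} to split on decoys; starting from rank $K$, the portal first enters the top-$k$ set only after $K-k$ decoy splits, i.e.\ at depth $K-k$. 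I would further ensure that no split above the portal confers predictive power: decoys carry only negligible correlation and never unlock the good sub-structure, so every node reachable without splitting on $g$ has label distribution within $\eps$ of balanced. Hence to achieve accuracy bounded away from $\tfrac12$, $\topk$ must split on $g$ and then spend a further $h$ levels (the $g$-split together with a depth-$(h-1)$ completion), for a total of $(K-k)+h$ levels; with a budget of only $h+(K-k-1)=(K-k)+h-1$ it is exactly one level short, so every tree it can produce has accuracy at most $\tfrac12+\eps$. This off-by-one is precisely what pins the extra budget at $K-k-1$ rather than $K-k$, and the case $K=k+1$ recovers \Cref{thm:k-hierarchy}.

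The step I expect to be the main obstacle is making the rank-decrease mechanism \emph{exact and robust simultaneously}: I need the decoy and portal correlations preserved closely enough under repeated conditioning that (i) the portal never climbs into the top $k$ prematurely and is never accidentally unlocked by a decoy split, and (ii) the ordering ``decoys $>$ portal $>$ fillers'' survives in every descendant node and for every impurity-based heuristic. Conditioning on a correlated decoy perturbs the remaining correlations, so I would either keep all correlations small enough that these perturbations are lower-order, or design the joint law so the relevant marginals are conditionally invariant by construction. Finally, all of the above are population statements; to pass to a finite sample $\bS\sim\mcD^n$ I would take $n$ large (as a function of $\eps,k,K,h$ and the number of features) so that, by a union bound over the polynomially many nodes and features the algorithms inspect, all empirical scores and subtree accuracies lie within a fixed margin of their population values, making every ranking and accuracy bound above hold with high probability.
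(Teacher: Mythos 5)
Your overall architecture---a small pool of weakly correlated ``decoy'' features that monopolize $\topk$'s candidate set for the first $K-k$ levels, sitting alongside a depth-$h$ ``good'' structure, with the budget $h+(K-k-1)$ coming out exactly one level short---matches the paper's, as does your reduction from arbitrary impurity-based heuristics to a correlation ordering (\Cref{fact:score-correlation-monotone}). But two of your design choices create genuine gaps. First, your lower bound concludes from ``one level short'' that the accuracy is at most $\frac{1}{2}+\eps$; this requires the good structure to be \emph{all-or-nothing}, i.e., that $g$ together with any $h-2$ of the remaining good variables still leaves the label within $\eps$ of balanced. Your description of the good part as a gated structure in which ``conditioning on $g$ and recursing on further good variables drives the label to near-determinism'' is the opposite of this: a staircase-like refinement hands $\topk$ substantial accuracy once it reaches $g$ at depth $K-k$ and completes $h-2$ further good splits, which its budget of $h+(K-k-1)$ permits. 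The paper instead takes the good structure to be $\mathrm{Par}_h$ on $h$ variables precisely because any proper subset of parity variables carries zero correlation with, and zero information about, the parity (\Cref{lem:top-k-fail-nonmonotone}).

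Second, the all-or-nothing requirement collides with your Top-$K$ upper bound once filler features are present: if the good variables have zero correlation conditioned on any partial assignment (as parity forces), no impurity-based heuristic can distinguish them from fillers, so after splitting on $g$ the top-$K$ candidate set need not contain any of them and the depth-$(h-1)$ completion need not lie in the search space. The paper resolves this by having \emph{no} fillers and no positively correlated portal at all: the dimension is exactly $d=h+K-1$, so by pigeonhole every candidate set of size $K$ must contain a fresh parity variable, and \Cref{fact:complete-tree} together with \Cref{lem:best-in-search-space} finishes the upper bound. Relatedly, the obstacle you single out as the main one---stability of the correlations under repeated conditioning---is moot in the paper's construction, since the parity variables have exactly zero correlation at every internal node while the decoys retain correlation bounded away from zero; the real difficulties are the two above.
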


%\gnote{I think we get depth $d$ with top-$k$, vs depth $d + (k - k' - 1)$ for top-$k'$.}

The proof of~\Cref{thm:k-hierarchy-general} is simple and highlights the theoretical power of choices.  One downside, though, is that it is based on  data  distributions that are admittedly somewhat unnatural: the labeling function has embedded within it a function that is the XOR of certain features, and real-world datasets are unlikely to exhibit such adversarial structure.  To address this, we further prove that the power of choices is evident even for {\sl monotone} data distributions. We defer the definition of monotone data distributions to \Cref{subsec:monotone}.

\begin{theorem}
[Greediness hierarchy theorem for monotone data distributions]\label{thm:monotone-hierarchy-intro} 
For every $\eps > 0$, depth budget $h$, $K$ between $\tilde{\Omega}(h)$ and $\tilde{O}(h^2)$ and $k \leq K - h$, there is a monotone data distribution $\mcD$ and sample size $n$ for which, with high probability over a random sample $\bS \sim \mcD^n$, \Top{$K$} achieves at least $1 - \eps$ accuracy with a depth budget of $h$, but \Top{$k$} achieves at most $\frac1{2} + \eps$ accuracy with a depth budget of $h$.
\end{theorem}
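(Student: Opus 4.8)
The plan is to exhibit a single monotone target function $f\colon \{0,1\}^d \to \{0,1\}$ that is \emph{exactly} computed by a depth-$h$ decision tree, place a product distribution on the features, and then engineer a \emph{score inversion}. I want a small collection of \textbf{decoy} features whose impurity gain is largest at the root (and at every node \Top{$k$} can reach), together with a disjoint set of \textbf{signal} features---precisely the variables on which the depth-$h$ tree for $f$ branches---whose scores are strictly smaller, yet still large enough to rank within the top $K$. The two properties I need the gadget to satisfy are: (i) each decoy is individually informative (splitting on it purifies the children, so an impurity-based heuristic prefers it) but \emph{non-refining} (after conditioning on any decoy the residual is still near-balanced and its signal variables remain buried); and (ii) each signal variable is less informative at a single step but \emph{refining} (conditioning on it strictly advances one layer of the recursion defining $f$). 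I would build $f$ in $h$ layers, unlocking one signal variable per layer, with a fresh batch of decoys surrounding the signal variable at each layer.

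Granting such a gadget, the two directions are short. For \Top{$K$}, I would show by induction on the layer that at each node on the correct root-to-leaf path the (unique) signal variable ranks within the top $K$ scores, so it lies in the candidate set $\mathcal{I}$; since \Top{$K$} explores and then keeps the most accurate subtree, it recovers the depth-$h$ tree for $f$ and hence accuracy $1-\eps$. Because $K = \tilde\Omega(h)$, there is enough room in the candidate set to hold the signal variable even as its rank drifts across the $h$ layers. For \Top{$k$}, I would show that at every node in its recursion the top $k$ scores are occupied by decoys only, so \Top{$k$} never splits on a signal variable; then, because each decoy is non-refining, \emph{every} one of its at most $k^h$ candidate depth-$h$ trees leaves a residual that is $(\frac{1}{2}\pm\eps)$-balanced at its leaves, capping accuracy at $\frac{1}{2}+\eps$. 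The slack $k \le K-h$ is exactly what guarantees the decoys keep dominating the signal variable throughout all $h$ layers of this exploration.

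The main obstacle is constructing the monotone gadget with provable scores, and this is where monotonicity bites. Unlike the XOR-based construction behind \Cref{thm:k-hierarchy,thm:k-hierarchy-general}, a monotone relevant variable necessarily has strictly positive influence, so I cannot make the signal variables \emph{invisible} to the heuristic; I can only make them \emph{lower-scoring} than the decoys by a controlled margin. The quantitative heart of the proof is therefore an influence/impurity calculation---likely assembled from $\maj$- or $\tribes$-type blocks so that many decoys share a common, tunable score---certifying simultaneously that (a) the top $k$ ranks are decoys, (b) the signal variable surfaces by rank $K$, and (c) conditioning on any decoy preserves near-balance. Balancing (a)--(c) is what pins down the admissible window: the upper bound $K = \tilde O(h^2)$ reflects how many features can be packed between the decoy and signal score levels before the impurity gaps become too small to separate cleanly, and the lower bound $K = \tilde\Omega(h)$ together with $k \le K-h$ reflects the per-layer drift of the signal variable's rank.

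Finally, since the theorem concerns the empirical behavior on $\bS \sim \mcD^n$, I would close by passing from distributional to empirical scores: choosing $n$ polynomially large in $d$, $1/\eps$, and the inverse score margin, a Hoeffding bound shows that every empirical impurity score and every empirical leaf accuracy is within the separating margin of its true value, and a union bound over the at most $K^h$ nodes appearing in the two recursions transfers the ranking and accuracy guarantees to the sample with high probability.
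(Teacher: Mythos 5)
Your high-level strategy is the right one and matches the paper's in spirit: a block of high-score ``decoy'' features that monopolize the top $k$ ranks at every node, a block of low-score ``signal'' features that a depth-$h$ tree needs and that survive into the top $K$, with Majority supplying the decoys and Tribes the signal. You also correctly locate where the constraints $K=\tilde{O}(h^2)$ and $k\le K-h$ come from. But the proposal has a genuine gap exactly where you flag ``the main obstacle,'' and that obstacle is the proof. First, a structural inconsistency: if $f$ is a deterministic function exactly computed by a depth-$h$ tree over the signal variables and the features are independent, then every decoy has zero covariance with the label and hence zero impurity gain --- it cannot simultaneously be ``individually informative.'' The paper resolves this by making the label a stochastic mixture, $\tribes_h(x^{(1)})$ with probability $1-\eps$ and $\maj_{K-1}(x^{(2)})$ with probability $\eps$, over the uniform marginal; the $\eps$-weight Majority component is what gives the $K-1$ decoys correlation $\approx \eps/\sqrt{K}$, which beats the Tribes variables' $O(\log h/h)$ precisely when $K\le \tilde{O}(h^2)$. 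Your layered gadget with ``fresh decoys per layer'' is unnecessary and harder to make monotone; the paper's construction is flat, and the \Top{$K$} direction is then a one-line pigeonhole (there are only $K-1$ decoys in total, so any non-redundant top-$K$ set contains an unqueried signal variable), with no need to track rank drift.

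Second, and more importantly, your items (a)--(c) are asserted, not proved, and (a) is the crux: you must show the decoys' scores stay above the signals' scores \emph{conditioned on the partial restrictions} of decoy variables that accumulate along \Top{$k$}'s root-to-leaf paths. Conditioning on up to $h$ Majority coordinates changes the remaining Majority coordinates' correlations, and controlling this is a nontrivial Fourier-analytic fact; the paper imports it as \citep[Lemma 7.4]{BLT-ITCS}, which guarantees that with probability $1-O(K^{-2})$ over a random path of length $O(K/\log K)$, all unqueried $x^{(2)}$ coordinates retain correlation $\Omega(1/\sqrt{K})$. This is also where the lower bound $K\ge\tilde{\Omega}(h)$ actually comes from (so that depth-$h$ paths fall within the lemma's reach), not from ``per-layer rank drift.'' Without this conditional-correlation control, your claim that the top $k$ ranks are decoys at \emph{every} reachable node does not follow from root-level score calculations, and the whole \Top{$k$}-fails direction (\Cref{lem:top-k-fail-monotone}) is unsupported. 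Your final sampling/Hoeffding paragraph is fine and matches the paper's statistical-query reduction.
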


Many real-world data distributions are monotone in nature, and relatedly, they are a common assumption and the subject of intensive study in learning theory.  Most relevant to this paper, recent theoretical work has identified monotone data distributions as a broad and natural class for which classical greedy decision tree algorithms (i.e., Top-$1$) provably succeed~\citep{BLT-ITCS,BLT-ICML}. \Cref{thm:monotone-hierarchy-intro} shows that even within this class, increasing the greediness parameter can lead to dramatic gains in accuracy.  Compared to~\Cref{thm:k-hierarchy-general}, the proof of~\Cref{thm:monotone-hierarchy-intro} is more technical  and involves the use of concepts from the Fourier analysis of boolean functions~\citep{ODBook}. %\lnote{Update if necessary} 

We note that a weaker version of~\Cref{thm:monotone-hierarchy-intro} is implicit in prior work: combining \citep[Theorem 7b]{BLT-ITCS} and \citep[Theorem 2]{BLQT21focs} yields the special case of \Cref{thm:monotone-hierarchy-intro} where $K = O(h^2)$ and $k = 1$. \Cref{thm:monotone-hierarchy-intro} is a significant strengthening as it allows for $k > 1$ and much smaller $K - k$.

%You might be concerned the function of \Cref{thm:k-hierarchy} was unnatural...

%\begin{theorem}[$k$-hierarchy theorem for monotone functions]

%\end{theorem}
%Mention that monotone functions are natural.

\subsubsection{Experimental results on the power of Top-$k$}

We provide extensive empirical validation of the effectiveness of Top-$k$ when trained on on real-world datasets, and provide an in-depth comparison with both standard greedy algorithms as well as optimal decision tree algorithms.

% \cnote{rewrite this}
We first compare the performance of $\topk$ for $k = 1,2,3,4,8,12,16$ (\Cref{fig:acc-topk-vs-top1}), and find that increasing $k$ does indeed provide a significant increase in test accuracy---in some cases, Top-$8$ already achieves accuracy comparable to the test accuracy attained by DL8.5~\citep{aglin2020learning}, an optimal decision tree algorithm. We further show, in \Cref{fig:time-topk-vs-top1,fig:optimal-trees}, that $\topk$ inherits the efficiency of popular greedy algorithms and scales much better than the state-of-the-art optimal decision tree algorithms MurTree and GOSDT \citep{GOSDT}. %\lnote{Add discussion of Figure 3 -- Chirag will handle.} 

Taken as a whole, our experiments demonstrate that $\topk$ provides a useful middle ground between greedy and optimal decision tree algorithms: it is significantly more accurate than greedy algorithms, but still fast enough to be practical on reasonably large datasets. See \Cref{sec:experiments} for an in-depth discussion of our experiments.
Finally, we emphasize the benefits afforded by the simplicity of Top-$k$. Standard greedy algorithms (i.e.~Top-$1$) are widely employed and easily accessible. Introducing the parameter $k$ requires modifying only a tiny amount of source code and gives the practitioner a new lever to control.  Our experiments and theoretical results demonstrate the utility of this simple lever.

\section{Related work}
\label{related}

\paragraph{Provable guarantees and limitations of greedy decision tree algorithms.} A long and fruitful line of work seeks to develop a rigorous understanding of the performances of greedy decision tree learning algorithms such as ID3, C4.5, and CART and to place their empirical success on firm theoretical footing~\citep{KM96,Kea96,DKM96,BDM19,BDM20,BLT-ITCS,BLT-ICML,BLQT21}.  These works identify feature and distributional assumptions under which these algorithms provably succeed; they also highlight the {\sl limitations} of these algorithms by pointing out settings in which they provably fail.  Our work complements this line of work by showing, theoretically and empirically, how these algorithms can be further improved with a simple new parameter while preserving their efficiency and scalability.  

\paragraph{The work of~\cite{BLQT21focs}.}  Recent work of Blanc, Lange, Qiao, and Tan~also highlights the power of choices in decision tree learning.  However, they operate within a stylized theoretical setting. First, they consider a specific scoring function that is based on a notion of {\sl influence} of features, and crucially, computing these scores requires {\sl query access} to the target function (rather than from random labeled samples as is the case in practice).  Furthermore, their results only hold with respect to the uniform distribution.  These are strong assumptions that limit the practical relevance of their results.  In contrast, a primary focus of this work is to be closely aligned with practice, and in particular, our framework captures and generalizes the standard greedy algorithms used in practice.

\paragraph{Optimal decision trees.} Motivated in part by the surge of interest in interpretable machine learning and the highly interpretable nature of decision trees, there have been numerous works on learning {\sl optimal} decision trees~\citep{BD17,VZ17,VZ19,AAV19,ZMPNK20,VNPQS20,NIPMS18,Ave20,JM20,NF07,NF10,HRS19,GOSDT,MURTREE}.
As mentioned in the introduction, this is an NP-complete problem~\citep{HR76}---indeed, it is NP-hard to find even an approximately optimal decision tree~\citep{Sie08,AH08,ABFKP08}.  Due to the fundamental intractability of this problem, even highly optimized versions of algorithms are unlikely to match the scalability of standard greedy algorithms. That said, these works implement a variety of optimizations that allow them to build optimal decision trees for many real world datasets when the dataset and feature sizes are in the hundreds and the desired depth is small ($\leq 5$).

Finally, another related line of work is that of {\sl soft} decision trees \citep{irsoy2012soft, tanno2019adaptive}. These works use gradient-based methods to learn soft splits at each internal node. We believe that one key advantage of our work over these soft trees is in interpretability. With $\topk$, since the splits are hard (and not soft), to understand the classification of a test point, it is sufficient to look at only one root-to-leaf path, as opposed to a weighted combination across many. 

\section{The $\topk$ algorithm} 
\label{sec:algorithm}

\subsection{Background and context: Impurity-based algorithms} 
\label{subsec:impurity}
Greedy decision tree learning algorithms like ID3, C4.5 and CART are all instantiations of $\topk$ in \Cref{fig:pseudocode} with $k=1$ and an appropriate choice of \violet{the} feature-scoring function $\mathcal{H}$. Those three algorithms all used {\sl impurity-based heuristics} as their feature-scoring function: 
\begin{definition}[Impurity-based heuristic]
    \label{def:impurity}
    An \emph{impurity function} $\mathcal{G}:[0,1] \to [0,1]$ is a function that is concave, symmetric about $0.5$, and satisfies $\mathcal{G}(0) = \mathcal{G}(1) = 0$ and $\mathcal{G}(0.5) = 1$. A feature-scoring function $\mathcal{H}$ is an \emph{impurity-based heuristic}, if there is some impurity function $\mathcal{G}$ for which: 
    \begin{align*}
        \mathcal{H}(S, i) &= \mathcal{G}\left(\Ex_{\bx,\by \sim S}[\by]\right) 
         - \Prx_{\bx,\by \sim S}[\bx_i = 0]\cdot\mathcal{G}\left(\Ex_{\bx,\by \sim S}[\by\mid \bx_i = 0]\right) \\&- \Prx_{\bx,\by \sim S}[\bx_i = 1]\cdot\mathcal{G}\left(\Ex_{\bx,\by \sim S}[\by\mid \bx_i = 1]\right)
    \end{align*}
    where in each of the above, $(\bx,\by)$ are a uniformly random point from within $S$.
\end{definition}
%Due to the concavity of $\mathcal{G}$, impurity-based heuristics give a high score to features that have large correlation with the label (i.e. when $\Ex_{\bx,\by \sim S}[\by\mid \bx_i = 0]$ and $\Ex_{\bx,\by \sim S}[\by\mid \bx_i = 1]$ are far apart). 
Common examples for the impurity function include the binary entropy function $\mathcal{G}(p)=-p\log_2(p) -(1-p)\log_2(1-p)$ (used by ID3 and C4.5), the Gini index $\mathcal{G}(p)=4p(1-p)$ (used by CART), and the function $\mathcal{G}(p)=2\sqrt{p(1-p)}$ (proposed and analyzed in~\cite{KM99}). We refer the reader to \cite{KM99} for a theoretical comparison, and \cite{DKM96} for an experimental comparison, of these impurity-based heuristics.% applied to \emph{greedy} algorithms ($\Top{1}$).

Our experiments focus on binary entropy being the impurity measure, but our theoretical results apply to $\Top{k}$ instantiated with \emph{any} impurity-based heuristic.

\gray{The feature-scoring function $\mathcal{H}$ is determined by a suitable heuristic for an ``impurity'' function $\mathcal{G}:[0,1] \to [0,1]$, which is such that it is constrained to be concave and symmetric around $0.5$. It also has the property that $\mathcal{G}(0) = \mathcal{G}(1) = 0$ (this corresponds to the ``pure node'' case) and $\mathcal{G}(0.5)=1$ (this corresponds to the ``maximally-impure node'' case). Given an impurity function $\mathcal{G}$, the feature scoring function $\mathcal{H}(S,i)$ on a labelled dataset $S = \{(x^{(j)}, y^{(j)})\}_{j=1}^n$, where each $x^{(j)} \in \{0,1\}^d, y^{(j)} \in \{0,1\}$, scores a feature $i$ as follows:
\begin{align*}
    \mathcal{H}(S, i) = \mathcal{G}(\Pr_{S}[y=1]) - (\Pr_S[x_i=0]\mathcal{G}(\Pr_{S_{x_i=0}}[y=1]) + \Pr_S[x_i=1]\mathcal{G}(\Pr_{S_{x_i=1}}[y=1]).
\end{align*}
As concrete examples, the impurity function used in ID3 and C4.5 is the binary entropy function $\mathcal{G}(p)=-p\log_2(p) -(1-p)\log_2(1-p)$, while CART uses the Gini index $\mathcal{G}(p)=4p(1-p)$. Additionally, Kearns and Mansour \citep{KM99} also analyze and prove guarantees for the impurity function $\mathcal{G}(p)=2\sqrt{p(1-p)}$. We note again that $\topk$ can be instantiated with any of these popularly used feature-scoring functions.}

% Give examples of impurity functions and how that leads to feature scoring functions. 

% Instantiate Figure 1 with different $\mathcal{H}$s and tell what each is.

\subsection{\violet{Basic theoretical properties} of the Top-$k$ algorithm}
\label{subsec:topk-analysis}
\paragraph{Running time.} The key behavioral aspect in which $\topk$ differs from greedy algorithms is that it is less greedy when trying to determine which coordinate to query. This naturally increases the running time of $\topk$, but that increase is fairly mild. More concretely, suppose $\topk$ is run on a dataset $S$ with $n$ points. We can then easily derive the following bound on the running time of $\topk$, where $\mathcal{H}(S, i)$ is assumed to take $O(n)$ time to evaluate (as it does for all impurity-based heuristics).
\begin{claim}
\label{claim:running-time}
The running time of $\topk(\mathcal{H}, S, h)$ is $O((2k)^h \cdot nd) $.
\end{claim}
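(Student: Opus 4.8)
The plan is to set up a recurrence for the running time in terms of the depth budget and then unroll it. Let $T(n,h)$ denote the worst-case running time of $\topk(\mathcal{H},S,h)$ over datasets $S$ of size at most $n$. The structure of the pseudocode immediately suggests such a recurrence, so the whole proof reduces to two tasks: (i) bounding the non-recursive work done by a single invocation, and (ii) counting how the recursion branches.

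First I would bound the per-call work, ignoring the recursive calls themselves. The base case (Step 1) costs $O(n)$. Step 2 evaluates $\mathcal{H}(S,i)$ for all $d$ coordinates; since each evaluation is $O(n)$ by assumption, this is $O(nd)$, and extracting the $k$ largest scores is a further $O(d)$, which is dominated. Forming the subsets $S_{x_i = 0}$ and $S_{x_i = 1}$ for each $i \in \mathcal{I}$ costs $O(nk) = O(nd)$ (note $k \le d$). The one point that needs care is Step 4: naively recomputing the accuracy of each candidate tree $T_i$ on $S$ by pushing all $n$ points through a depth-$h$ tree would cost $O(nh)$ per candidate. To avoid this I would have each recursive call return its training accuracy (equivalently, its count of correctly classified points) alongside the tree it outputs; the accuracy of $T_i$ is then a size-weighted average of the two values returned by its subtrees, computable in $O(1)$, so Step 4 costs only $O(k)$. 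Hence the total non-recursive work of a single call is $O(nd)$.

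Next I would account for the recursion. Each non-base call makes $2k$ recursive calls (a left and a right subtree for each of the $k$ coordinates in $\mathcal{I}$), each with depth budget $h-1$ and on a subset of $S$, hence of size at most $n$. This yields $T(n,h) \le 2k \cdot T(n,h-1) + O(nd)$ with base case $T(n,0) = O(n)$. Unrolling gives $T(n,h) \le O(nd) \sum_{j=0}^{h} (2k)^j = O((2k)^h\, nd)$, as claimed; equivalently, the recursion tree has at most $\sum_{j=0}^{h}(2k)^j = O((2k)^h)$ nodes, each performing $O(nd)$ work.

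The only genuine subtlety, and the main thing I would be careful about, is ensuring Step 4 does not inflate the per-node cost, which is why the accuracy should be threaded through the recursion rather than recomputed from scratch. The branching factor is $2k$ rather than $k$ precisely because each chosen coordinate spawns both a left and a right subtree, and bounding every subproblem crudely by size $n$ is what produces the clean $(2k)^h$ factor. I would note in passing that this can be tightened: since $|S_{x_i = 0}| + |S_{x_i = 1}| = |S|$, the total dataset size summed over all nodes at any fixed recursion depth $j$ is only $k^j n$, which improves the bound to $O(k^h\, nd)$. The stated $O((2k)^h\, nd)$ already follows from the simpler node-counting above and suffices for our purposes.
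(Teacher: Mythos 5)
Your proof is correct and follows essentially the same route as the paper's: the recurrence $T_h = 2kT_{h-1}$ with $T_0 = 1$ giving $(2k)^h$ recursive calls, each costing $O(nd)$ dominated by scoring the $d$ features. Your additional care about Step 4 (threading accuracies through the recursion) and the observation that the bound tightens to $O(k^h nd)$ are nice refinements the paper omits, but the core argument is identical.
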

\begin{proof}
    \violet{Let $T_h$ be the number of recursive calls made by $\Top{k}(\mathcal{H}, S, h)$. Then, we have the simple recurrence relation $T_h = 2kT_{h-1}$, where $T_{0}=1$. Solving this recurrence gives $T_h = (2k)^h$. Each recursive call takes $O(nd)$ time, where the bottleneck is scoring each of the $d$ features.}
    \gray{Referring to the pseudocode in \Cref{fig:pseudocode}, note that the set $\mathcal{I}$ in Step 2 can be identified in time $O(nd)$ by running $\mathcal{H}$ on all $d$ coordinates. Also, if we denote the total number of invoked recursive calls to $\topk$ by $T_h$, we have the simple recurrence relation $T_h = 2kT_{h-1}$, where $T_{0}=1$. Solving this recurrence, we have that that the total number of recursive calls $T_h = (2k)^h$, and hence the total running time of $\topk(\mathcal{H}, S,h)$ is $O((2k)^h\poly(n, d)).$}
\end{proof}
We note that any decision tree algorithm, including fast greedy algorithms such as ID3, C4.5, and CART, has runtime that scales exponentially with the depth $h$. The size of a depth-$h$ tree can be $2^h$, and this is of course a lower bound on the runtime as the algorithm needs to output such a tree. In contrast with greedy algorithms (for which $k=1$), $\topk$ incurs an additional $k^h$ cost in running time. As mentioned earlier, in practice, we are primarily concerned with fitting small decision trees (e.g., $h = 5$) to the data, as this allows for explainable predictions. In this setting, the additional $k^h$ cost (for small constant $k$) is inexpensive, as confirmed by our experiments.

\paragraph{The search space of $\Top{k}$:} We state and prove a simple claim that $\Top{k}$ returns the \emph{best} tree within its search space.%\gnote{change to sample rather than distribution}

\begin{definition}[Search space of $\Top{k}$]
    Given a sample $S$ and integers $h,k$,
    we use $\mathcal{T}_{k,h,S}$ to refer to all trees in the search space of $\Top{k}$.
    Specifically, if $h = 0$, this contains all trees with a height of zero (the constant $0$ and constant $1$ trees). For $h \geq 1$, and $\mathcal{I} \subseteq [d]$ being the $k$ coordinates with maximal score, this contains all trees with a root of $x_i$, left subtree in $\mathcal{T}_{k,h-1, S_{x_i = 0}}$ and right subtree in $\mathcal{T}_{k,h-1, S_{x_i = 1}}$ for some $i \in \mathcal{I}$.
\end{definition}

\begin{lemma}[$\Top{k}$ chooses the most accurate tree in its search space]
    \label{lem:best-in-search-space}
    For any sample $S$ and integers $h, k$, let $T$ be the output of $\Top{k}$ with a depth budget of $h$ on $S$. Then
    \begin{align*}
        \Prx_{\bx,\by \sim S}[T(\bx) = \by] = \max_{T' \in \mathcal{T}_{k,h,S}}\left( \Prx_{\bx,\by \sim S}[T'(\bx) = \by]\right).
    \end{align*}
\end{lemma}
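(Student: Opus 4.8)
The plan is to prove the lemma by induction on the depth budget $h$, with the whole argument resting on a single decomposition identity: the accuracy of a tree on $S$ is the weighted average of the accuracies of its two subtrees on the corresponding subsamples. First I would handle the base case $h = 0$, where $\mathcal{T}_{k,0,S}$ consists only of the constant-$0$ and constant-$1$ trees; Step~1 of $\Top{k}$ returns whichever constant is more accurate on $S$, which is by definition $\max_{T'\in\mathcal{T}_{k,0,S}}\Prx_{\bx,\by\sim S}[T'(\bx)=\by]$. I would also dispatch here the early-termination branch of Step~1 that can fire when every point of $S$ shares a label even though $h \geq 1$: in that case the returned constant achieves accuracy $1$, and since no tree can exceed accuracy $1$, it is trivially optimal over the search space.

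For the inductive step I would fix $h \geq 1$ and a non-pure sample $S$, and assume the claim for depth budget $h-1$ on every sample. The key identity I would establish is that for any tree $T'$ with root $x_i$, left subtree $L$, and right subtree $R$,
\begin{align*}
\Prx_{\bx,\by \sim S}[T'(\bx) = \by] = \Prx_{\bx,\by\sim S}[\bx_i = 0]\cdot \Prx_{\bx,\by\sim S_{x_i=0}}[L(\bx)=\by] + \Prx_{\bx,\by\sim S}[\bx_i = 1]\cdot \Prx_{\bx,\by\sim S_{x_i=1}}[R(\bx)=\by],
\end{align*}
which holds because a point with $\bx_i = b$ is routed into the subtree built on $S_{x_i=b}$ and labeled entirely by it. Since the two terms depend on $L$ and $R$ separately, for a fixed root $x_i$ the accuracy is maximized by independently choosing $L$ to maximize accuracy over $\mathcal{T}_{k,h-1,S_{x_i=0}}$ and $R$ to maximize accuracy over $\mathcal{T}_{k,h-1,S_{x_i=1}}$. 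By the inductive hypothesis these maximizers are precisely the trees returned by the recursive calls $\Top{k}(\mathcal{H},S_{x_i=0},h-1)$ and $\Top{k}(\mathcal{H},S_{x_i=1},h-1)$, so the tree $T_i$ assembled in Step~3 is the most accurate member of $\mathcal{T}_{k,h,S}$ among all trees rooted at $x_i$.

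To finish, I would use that $\mathcal{T}_{k,h,S}$ is exactly the union over $i \in \mathcal{I}$ of the trees rooted at $x_i$, so the overall maximum accuracy over the search space equals $\max_{i\in\mathcal{I}}\Prx_{\bx,\by\sim S}[T_i(\bx)=\by]$; this is precisely the quantity Step~4 selects, giving the claimed equality. I expect the proof to be essentially mechanical once the decomposition identity is written down. The only point that needs a little care is justifying that the two subtrees may be optimized independently — this is valid because, once the root $x_i$ is fixed, the routing of points and the two subtree search spaces $\mathcal{T}_{k,h-1,S_{x_i=0}}$ and $\mathcal{T}_{k,h-1,S_{x_i=1}}$ are completely decoupled — together with verifying that the constant returned in the early-stopping branch is genuinely optimal, which I noted above.
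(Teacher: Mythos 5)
Your proposal is correct and follows essentially the same argument as the paper's proof: induction on the depth budget $h$, the weighted decomposition of a tree's accuracy into its subtrees' accuracies on $S_{x_i=0}$ and $S_{x_i=1}$, independent optimization of the two subtrees via the inductive hypothesis, and taking the maximum over $i \in \mathcal{I}$ in the final step. Your explicit handling of the early-termination branch when $S$ is pure is a small point of extra care that the paper's proof leaves implicit, but it does not change the approach.
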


We refer the reader to \Cref{appsec:searchspace} for the proof of this lemma.

% \paragraph{Parallelizability.} We observe that $\topk$ is largely amenable to parallelization. Observe that each of the $2k$ recursive calls in Step 3 of the pseudocode can be assigned to a separate processor. This means that given $(2k)^h$ processors, a carefully engineered implementation of $\topk$ can enjoy a parallel running time of $O(hnd)$, which is also the parallel running time of standard greedy algorithms.

%Analysis, bounds on runtime, etc.  State that the algorithm is highly parallelizable. 

\section{Theoretical bounds on the power of choices}
\label{sec:theory}

We refer the reader to the \Cref{sec:theory-appendix} for most of the setup and notation. For now, we briefly mention a small amount of notation relevant to this section:  we use \textbf{bold font} (e.g. $\bx$) to denote random variables. We also use bold font to indicate \emph{stochastic functions} which output a random variable. For example,
\begin{equation*}
    \boldf(x) \coloneqq \begin{cases}
    x & \text{with probability $\lfrac{1}{2}$}\\
    -x & \text{with probability $\lfrac{1}{2}$}
    \end{cases}
\end{equation*}
is the stochastic function that returns either the identity or its negation with equal probability. To define the data distributions of \Cref{thm:k-hierarchy-general,thm:monotone-hierarchy-intro}, we will give a distribution over the domain, $X$ and the stochastic function that provides the label given an element of the domain. %Throughout this section, the domain is $X \coloneqq \zo^d$ and the distribution over it is uniform.

%\lnote{Add brief justification/citation for the fact that when the distribution is uniform, ID3 splits on the feature wit highest correlation. In fact, we can say that this is the case for all impurity based heuristics with concave impurity function etc.  Should be in the ITCS or ICML paper.}\gnote{This is \Cref{fact:score-correlation-monotone}}

%\gray{In this section, we will assume for simplicity that $\topk$ has exact access to $f$ and can exactly compute the heuristic scores $\mathcal{H}(f,i)$ for each $x_i$, rather than estimating them from a sample. Furthermore, we will let the distribution over inputs $x$ be uniform over $\bits^d$.}

\paragraph{Intuition for proof of greediness hierarchy theorem}
To construct a distribution which \textnormal{Top}-$k$ fits poorly and \textnormal{Top}-$(k+1)$ fits well, we will partition features into two groups: one group consisting of features with medium correlation to the labels and another group consisting of features with high correlation when taken all together but low correlation otherwise.
Since the correlation of features in the former group is larger than that of the latter group unless all features from the latter group are considered, both algorithms will prioritize features from the former group.
However, if the groups are sized correctly, then \textnormal{Top}-$(k+1)$ will consider splitting on all features from the latter group, whereas \textnormal{Top}-$k$ will not.
As a result, \textnormal{Top}-$(k+1)$ will output a decision tree with higher accuracy.

\subsection{Proof of \Cref{thm:k-hierarchy-general}}
\label{subsec:nonmonotone}
For each depth budget $h$ and search branching factor $K$,
we will define a hard distribution $\mathcal{D}_{h,K}$ that is learnable to high accuracy by $\Top{K}$ with a depth of $h$,
but not by $\topk$ with a depth of $h'$ for any $h' < h + K - k$.
This distribution will be over $\zo^d \times \zo$, where $d = h + K - 1$.
The marginal distribution over $\zo^d$ is uniform,
and the distribution over $\zo$ conditioned on a setting of the $d$ features is given by the stochastic function $\boldf_{h,K}(x)$. All of the results of this section (\Cref{thm:k-hierarchy-general,thm:monotone-hierarchy-intro}) hold when the feature scoring function is \emph{any} impurity-based heuristic.

\paragraph{Description of $\boldf_{h,K}(x)$.} %\gnote{Can we use $\boldf$ for stochastic functions?}
Partition $x$ into two sets of variables, $x^{(1)}$ of size $h$ and $x^{(2)}$ of size $K-1$.
Let $\boldf_{h, K}(x)$ be the randomized function defined as follows:
\[\boldf_{h,K}(x) = \begin{cases}
\mathrm{Par}_{h}(x^{(1)}) &\text{with probability }1 - \eps \\
x^{(2)}_i \sim \mathrm{Unif}[x^{(2)}] &\text{with probability }\eps,
% x^{(2)}_i \text{ chosen uniformly at random from }x^{(2)} &\text{with probability }\eps.
\end{cases}\] 
where $\mathrm{Unif}[x^{(2)}]$ denotes the uniform distribution on $x^{(2)}$. $\mathrm{Par}_h(x^{(1)})$ is the parity function, whose formal definition can be found in \Cref{sec:theory-appendix}.

The proof of \Cref{thm:k-hierarchy-general} is divided into two parts.
First, we prove that when the data distribution is $\mathcal{D}_{h, K}$,
$\Top{K}$ succeeds in building a high accuracy tree with a depth budget of $h$.
Then, we show that $\Top{k}$ fails and builds a tree with low accuracy,
even given a depth budget of $h + (K - k - 1)$.

\begin{lemma}[$\Top{K}$ succeeds]
\label{lem:top-k-succeed-nonmonotone}
The accuracy of $\Top{K}$ with a depth of $h$ on $\mathcal{D}_{h,K}$ is at least $1 - \eps$.
\end{lemma}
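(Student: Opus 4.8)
The plan is to invoke \Cref{lem:best-in-search-space}, which guarantees that $\Top{K}$ returns the most accurate tree in its search space $\mathcal{T}_{K,h,S}$. It therefore suffices to exhibit a single depth-$h$ tree in this search space with accuracy at least $1-\eps$. The natural candidate is the complete parity decision tree $T^\star$ that queries the $h$ variables of $x^{(1)}$ along every root-to-leaf path and outputs $\mathrm{Par}_h(x^{(1)})$ at each leaf. Its accuracy is immediate: under $\mathcal{D}_{h,K}$ the label equals $\mathrm{Par}_h(x^{(1)})$ with probability $1-\eps$, so $T^\star$ has accuracy at least $1-\eps$; moreover at each leaf (where $x^{(1)}$ is fully fixed, so $\mathrm{Par}_h(x^{(1)})$ is a constant $v$ with $\Pr[\by = v] = 1-\eps/2 > 1/2$) the value $v$ is the best constant. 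Thus the entire burden is to show $T^\star \in \mathcal{T}_{K,h,S}$, i.e.\ that at every node of $T^\star$ the variable we split on lies among the top-$K$ scoring features.

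The heart of the argument is a score computation. First I would show that every $x^{(1)}$ feature has score exactly $0$: conditioning on $x^{(1)}_j = b$ leaves both the residual parity and the independent random-$x^{(2)}$ part perfectly balanced, so each child has label mean exactly $1/2$ and $\mathcal{H}(S,x^{(1)}_j) = \mathcal{G}(1/2) - \tfrac12\mathcal{G}(1/2) - \tfrac12\mathcal{G}(1/2) = 0$. Next, each $x^{(2)}_i$ has \emph{strictly positive} score: a short computation using $x^{(1)} \perp x^{(2)}$ gives conditional label means $\tfrac12 \pm \tfrac{\eps}{2(K-1)}$, which straddle $1/2$, so concavity of $\mathcal{G}$ (uniquely maximized at $1/2$) forces $\mathcal{H}(S,x^{(2)}_i) > 0$. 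Since there are exactly $K-1$ features in $x^{(2)}$, the top $K-1$ scores are occupied by all of $x^{(2)}$ and the $K$-th slot falls to an $x^{(1)}$ feature. This single extra slot — available to $\Top{K}$ but not to $\Top{k}$ for $k \le K-1$ — is precisely the room needed to split on a parity variable.

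To assemble all of $T^\star$ I would then induct down the tree, maintaining the invariant that conditioning on any partial assignment to $x^{(1)}$ preserves this score structure: $x^{(2)}$ stays independent of $x^{(1)}$ and the residual parity on the unqueried $x^{(1)}$ variables stays balanced, so at every node the $K-1$ features of $x^{(2)}$ again strictly dominate and exactly one top-$K$ slot remains for an $x^{(1)}$ feature. As long as fresh (unqueried) $x^{(1)}$ variables remain — which holds at all $h$ levels of splits since $d = h + K - 1$ — we may route $T^\star$ through a fresh $x^{(1)}$ variable at each node, placing $T^\star$ in the search space and concluding via \Cref{lem:best-in-search-space}.

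The main obstacle is a tie-breaking subtlety hiding in that last step. All $x^{(1)}$ features score $0$, and an \emph{already-queried} $x^{(1)}$ variable is constant on the conditioned subsample and hence also scores exactly $0$; an adversarial choice of the single available slot could repeatedly select such a constant (degenerate) feature, wasting a level and stalling the parity construction within the budget of exactly $h$. I would resolve this by recording that impurity-based scores are always nonnegative (concavity of $\mathcal{G}$ and Jensen), so fresh $x^{(1)}$ features are never dominated by constant ones, and by adopting the natural convention that $\mathcal{I}$ excludes features constant on $S$ (such a feature induces a trivial split and is never worth choosing); under this convention a fresh $x^{(1)}$ variable always fills the slot. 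For the sampled statement of \Cref{thm:k-hierarchy-general} this subtlety in fact dissolves: for $n$ large the empirical scores concentrate, fresh $x^{(1)}$ variables receive \emph{strictly} positive empirical scores while constant ones remain exactly $0$, and all $x^{(2)}$ scores stay above all $x^{(1)}$ scores with high probability, so the same top-$K$ structure — and hence $T^\star$ — survives in the empirical search space.
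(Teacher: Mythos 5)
Your proposal is correct and follows the same skeleton as the paper's proof: reduce to showing that a complete depth-$h$ parity tree over $x^{(1)}$ lies in the search space, then invoke \Cref{lem:best-in-search-space} (and, implicitly, \Cref{fact:complete-tree}). The one substantive difference is how you certify that an unqueried $x^{(1)}$ variable is always among the $K$ candidates. The paper does this by pure pigeonhole: there are only $K-1$ coordinates in $x^{(2)}$ in total, so \emph{any} set of $K$ candidates drawn from the unqueried coordinates must contain a fresh $x^{(1)}$ variable, no matter what the scores are. Your score computations ($x^{(1)}$ variables score $0$, $x^{(2)}$ variables have conditional means $\tfrac12 \pm \tfrac{\eps}{2(K-1)}$) are accurate but unnecessary for this lemma --- they establish \emph{which} features fill the other $K-1$ slots, which only matters for the companion lemma that $\topk$ fails. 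Two consequences of adopting the counting argument: (i) your tie-breaking worry dissolves, since the paper's standing non-redundancy convention (candidates are the top $K$ among the \emph{unqueried} coordinates) already excludes previously split, constant features, and the pigeonhole bound is insensitive to how ties among the remaining features are broken; and (ii) you avoid a small gap in your positivity claim --- the paper's definition of an impurity function requires concavity but not that $\mathcal{G}$ is \emph{uniquely} maximized at $\tfrac12$ (e.g.\ $\mathcal{G}(p)=\min(1,2p,2(1-p))$ is admissible and flat near $\tfrac12$), so $\mathcal{H}(S,x^{(2)}_i)>0$ does not follow from concavity alone. Neither point invalidates your conclusion here, but the leaner counting argument is both shorter and more robust.
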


\begin{lemma}[$\topk$ fails]
\label{lem:top-k-fail-nonmonotone}
The accuracy of $\topk$ with a depth of $h'$ on $\mathcal{D}_{h,K}$ is at most $(1/2 + \eps)$ for any $h' < h + K - k$.
\end{lemma}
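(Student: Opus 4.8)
The plan is to show that $\topk$ with any depth budget $h' < h+K-k$ fails to learn $\mathcal{D}_{h,K}$, by arguing that the algorithm cannot make progress on the parity part of the labeling function. The key structural insight is this: the label is (with probability $1-\eps$) the parity of the $h$ variables in $x^{(1)}$, and parity has the property that \emph{any} strict subset of its relevant variables carries zero correlation with the output. Under the uniform marginal, this means that for a node where we have not yet split on \emph{all} of the parity variables $x^{(1)}$, the conditional label distribution is essentially balanced with respect to every remaining $x^{(1)}$-coordinate, so each such coordinate receives impurity score near zero. By contrast, the $K-1$ ``dictator-like'' variables $x^{(2)}$ each have a small but strictly positive correlation (of order $\eps$) with the label individually, so they score higher than any unsplit parity variable. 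Hence the $k$ highest-scoring features at each node will be drawn from $x^{(2)}$ (together with parity variables that have already been resolved), and $\topk$ will never place a parity variable at the root of any subtree until it is forced to.

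First I would make precise the scoring comparison at the root and, inductively, at every internal node. I would compute $\mathcal{H}(S,i)$ for $i \in x^{(1)}$ and for $i \in x^{(2)}$ under the distribution $\mathcal{D}_{h,K}$, using the fact that $\mathcal{H}$ is an arbitrary impurity-based heuristic (so I can only use concavity, symmetry, and the boundary conditions of $\mathcal{G}$, not a specific formula). The crucial quantitative claim is that splitting on a parity variable whose $h-1$ siblings have not all been split yields score exactly $0$ (the two children have identical balanced label distributions), whereas splitting on any $x^{(2)}$ variable yields a strictly positive score. This guarantees $\mathcal{I}$, the top-$k$ set, excludes every not-yet-resolved parity variable whenever $k \le K-1$. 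I would then argue by induction on depth that along every root-to-leaf path of length less than $h$, the tree has split on at most $h-1$ of the parity variables, so parity remains unresolved and the leaf label is a near-coin-flip, giving total accuracy at most $\tfrac12+\eps$.

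The delicate part is handling the depth budget $h' = h+K-k-1$ rather than just $h' = h$, and accounting for the interaction between the two variable groups as the dataset is repeatedly restricted. The point is that to finally resolve parity, $\topk$ must split on all $h$ variables of $x^{(1)}$ along some path; but before it is ever ``willing'' to split on an $x^{(1)}$ variable, it must have exhausted enough of the $x^{(2)}$ variables along that path so that fewer than $k$ positively-scoring $x^{(2)}$ variables remain. Since there are $K-1$ such variables and the top-$k$ set has size $k$, the algorithm can be forced to burn roughly $K-1-(k-1)=K-k$ levels of depth on $x^{(2)}$ splits before any path can even begin to make parity progress --- and once those $x^{(2)}$ variables are consumed, only $h' - (K-k)$ levels remain, which is strictly less than $h$ when $h' < h+K-k$. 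I would formalize this by a potential/counting argument tracking, for each node, how many $x^{(2)}$ variables remain available and how many $x^{(1)}$ variables have been resolved, showing these cannot simultaneously reach the thresholds needed for high accuracy within the allotted depth.

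I expect the main obstacle to be the inductive bookkeeping over restricted datasets: I must verify that the scoring inequality (every unresolved parity variable scores strictly below every still-available $x^{(2)}$ variable) is preserved under arbitrary restrictions of $S$ by earlier splits, and that this holds simultaneously for \emph{all} impurity functions $\mathcal{G}$ rather than a single concrete one. Handling the sampling error --- passing from the population distribution $\mathcal{D}_{h,K}$ to a finite sample $\bS \sim \mathcal{D}_{h,K}^n$ so that the empirical scores and accuracies are within $o(\eps)$ of their population values with high probability --- is the remaining technical step, which I would control by a standard uniform-convergence bound with $n$ chosen large enough relative to $d=h+K-1$, $k$, and $\eps$.
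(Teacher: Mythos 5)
Your proposal is correct and follows essentially the same route as the paper's proof: you compare correlations (unresolved parity variables have zero correlation with the label while each $x^{(2)}$ variable has strictly positive correlation, so under any impurity-based heuristic the top-$k$ candidates are drawn from $x^{(2)}$ as long as at least $k$ of them remain unqueried), then count that every path must spend at least $K-k$ splits on $x^{(2)}$ before any parity variable can enter the candidate set, leaving fewer than $h$ levels to resolve $\mathrm{Par}_h(x^{(1)})$, whose value is an unbiased coin flip conditioned on any proper subset of its variables. The paper packages the score-versus-correlation step as a citation to an existing fact about impurity-based heuristics under uniform marginals and the sampling issue as a standard statistical-query concentration argument, but the substance is identical to what you describe.
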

Proofs of both these lemmas are deferred to \Cref{sec:theory-appendix}. \Cref{thm:k-hierarchy-general} then follows directly from these two lemmas.

\subsection{Proof of \Cref{thm:monotone-hierarchy-intro}} 
\label{subsec:monotone}

In this section, we overview the proof \Cref{thm:monotone-hierarchy-intro}. Some of the proofs are deferred to \Cref{subsec:monotone-appendix}.

% \begin{theorem}[Greediness hierarchy theorem for monotone distributions]
% \label{thm:k-hierarchy-monotone}
%     For every $\eps > 0$, depth budget $h$, $K$ between $\Omega(h \log h)$ and $O(h^2 / (\log h)^2)$ and $k \leq K - h$, there is a monotone data distribution on which \Top{$K$} achieves at least $1 - \eps$ accuracy with a depth budget of $h$, but \Top{$k$} achieves at most $0.5 + \eps$ accuracy with a depth budget of $h$.
% \end{theorem} 

% \gray{Combining \cite[Theorem 7b]{BLT-ITCS} and \cite[Theorem 2]{BLQT21focs} implies the above where $K = O(h^2)$ and $k = 1$. \Cref{thm:k-hierarchy-monotone} represents a significant strengthening of that result, as it allows for $k > 1$ and $K - k$ quite a bit smaller.}\gnote{This could maybe be in the intro instead}

Before proving \Cref{thm:monotone-hierarchy-intro}, we formalize the concept of monotonicity. For simplicity, we assume the domain is the Boolean cube, $\zo^d$, and use the partial ordering $x \preceq x'$ iff $x_i \leq x_i'$ for each $i \in [d]$; however, the below definition easily extends to the domain being any partially ordered set. 
\begin{definition}[Monotone]
    A stochastic function, $\boldf:\zo^d \to \zo$, is \emph{monotone} if, for any $x, x' \in \zo^d$ where $x \preceq x'$, $\Ex[\boldf(x)] \leq \Ex[\boldf(x')]$. A data distribution, $\mathcal{D}$ over $\zo^d \times \zo$ is said to be monotone if the corresponding stochastic function, $\boldf(x)$ returning $(\by \mid \bx = x)$ where $(\bx,\by) \sim \mathcal{D}$, is monotone.
\end{definition}

To construct the data distribution of \Cref{thm:monotone-hierarchy-intro}, we will combine monotone functions, Majority and Tribes, commonly used in the analysis of Boolean functions due to their extremal properties. See \Cref{subsec:monotone-appendix} for their definitions and useful properties. Let $d = h + K-1$, and the distribution over the domain be uniform over $\zo^d$. Given some $x \in \zo^d$, we use $x^{(1)}$ to refer to the first $h$ coordinates of $x$ and $x^{(2)}$ the other $K-1$ coordinates. This data distribution is labeled by the stochastic function $\boldf$ given below.
\begin{equation*}
    \boldf(x) \coloneqq \begin{cases}
        \tribes_{h}(x^{(1)}) & \text{with probability $1 - \eps$} \\
        \maj_{K-1}(x^{(2)}) & \text{with probability $\eps$.}
    \end{cases}
\end{equation*}
Clearly $\boldf$ is monotone as it is the mixture of two monotone functions. Throughout this subsection, we'll use $\mathcal{D}_{h,K}$ to refer to the data distribution over $\zo^d \times \zo$ where to sample $(\bx,\by) \sim \mcD$, we first draw $\bx \sim \zo^d$ uniformly and then $\by$ from $\boldf(\bx)$. The proof of \Cref{thm:monotone-hierarchy-intro} is a direct consequence of the following two Lemmas, both of which we prove in \Cref{subsec:monotone-appendix}.

\begin{lemma}[\Top{$K$} succeeds]
    \label{lem:top-K-succeed-monotone}
    On the data distribution $\mathcal{D}_{h,K}$, $\Top{K}$ with a depth budget of $h$ achieves at least $1 - \eps$ accuracy.
\end{lemma}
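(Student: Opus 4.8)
The plan is to invoke \Cref{lem:best-in-search-space}, which reduces the claim to exhibiting a single tree of depth at most $h$ lying in the search space $\mathcal{T}_{K,h,S}$ that achieves accuracy at least $1-\eps$. The natural candidate is the tree $T^\star$ that computes $\tribes_h(x^{(1)})$ exactly: along each root-to-leaf path it queries coordinates of $x^{(1)}$ until the value of $\tribes_h(x^{(1)})$ is pinned down. Since $\tribes_h$ depends only on the $h$ coordinates of $x^{(1)}$, such a tree has depth at most $h$, and because $\by = \tribes_h(\bx^{(1)})$ with probability $1-\eps$, we immediately get $\Prx_{\bx,\by}[T^\star(\bx)=\by]\ge 1-\eps$. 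So the entire content is to show that $T^\star$ can be realized inside the search space of $\Top{K}$.

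First I would build $T^\star$ top-down and verify it stays in the search space. Consider any node reached by splitting only on $x^{(1)}$-coordinates, with the corresponding restriction, and suppose $\tribes_h$ restricted there is not yet constant. Then some unqueried $x^{(1)}$-coordinate is relevant to the restricted function, and since $\maj_{K-1}(x^{(2)})$ is independent of $x^{(1)}$, splitting on such a coordinate strictly shifts the conditional label mean and hence has strictly positive score under any (strictly concave) impurity-based heuristic of \Cref{def:impurity}; let $t^\star$ be the unqueried $x^{(1)}$-coordinate of largest score. The crux is a counting observation: the only candidate features that can outscore $t^\star$ are the $K-1$ coordinates of $x^{(2)}$, because every other unqueried $x^{(1)}$-coordinate scores at most that of $t^\star$ by choice, and every already-queried coordinate is constant on the restricted sample and so has score exactly $0$. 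Hence at most $K-1$ features strictly outscore $t^\star$, placing $t^\star$ among the $K$ highest-scoring features, so it is an admissible split for $\Top{K}$. I emphasize that this argument needs no comparison whatsoever between the $\tribes$ and $\maj$ scores; it uses only that $\lvert x^{(2)}\rvert = K-1$.

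The one subtlety is the case where $\tribes_h$ restricted has already become constant, equal to some $c$, while depth budget remains: now every unqueried $x^{(1)}$-coordinate has score $0$ and may be crowded out of the top $K$ by the $\maj$ coordinates, forcing a split on an $x^{(2)}$-coordinate. This is harmless: at every leaf below such a node the label equals $c$ with probability at least $1-\eps \ge \tfrac12$, so the most accurate constant leaf outputs $c$, and the whole subtree therefore computes the constant $c = \tribes_h(x^{(1)})$ no matter how the forced $x^{(2)}$-splits are chosen. Thus $T^\star$ lies in $\mathcal{T}_{K,h,S}$ and computes $\tribes_h(x^{(1)})$ on every input, giving accuracy at least $1-\eps$ and completing the proof via \Cref{lem:best-in-search-space}.

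I expect the main obstacle here to be bookkeeping rather than any genuine mathematical difficulty: unlike the companion lemma showing $\topk$ fails, the success direction requires none of the Fourier-analytic properties of $\tribes$ or $\maj$, only that $\lvert x^{(2)}\rvert = K-1$ and $\eps<\tfrac12$. The points needing care are (i) confirming $T^\star$ never exceeds depth $h$ and remains in the search space through the constant-$\tribes$ case above; (ii) handling ties in the scoring function, dispatched by breaking ties in favor of $t^\star$, which is legitimate since strictly fewer than $K$ features beat it; and (iii) passing from this population-level argument to the high-probability statement over $\bS\sim\mathcal{D}_{h,K}^n$ in \Cref{thm:monotone-hierarchy-intro}. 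Because the counting argument depends only on the number of $x^{(2)}$-features and not on exact score values, the only effects of sampling are whether $t^\star$ retains a positive empirical score (queried features remain at exactly $0$, so this is all that is needed) and whether the empirical accuracy of $T^\star$ stays near $1-\eps$; both follow from standard concentration once $n$ is large enough that each of the at most $2^h$ restricted subsamples is well-populated.
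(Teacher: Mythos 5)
Your proposal is correct and follows essentially the same route as the paper: reduce via \Cref{lem:best-in-search-space} to exhibiting the tree computing $\tribes_h(x^{(1)})$ in the search space, and guarantee an unqueried $x^{(1)}$-coordinate is always among the $K$ candidates by the counting argument that only $K-1$ coordinates lie outside $x^{(1)}$. The paper sidesteps your ``restricted Tribes becomes constant'' case by simply taking the complete non-redundant depth-$h$ tree that queries all of $x^{(1)}$ on every path and labeling its leaves appropriately (\Cref{fact:complete-tree}), but this is a bookkeeping difference, not a different argument.
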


\begin{lemma}[\Top{$k$} fails]
    \label{lem:top-k-fail-monotone}
    On the data distribution $\mathcal{D}_{h,K}$, $\Top{k}$ with a depth budget of $h$ achieves at most $\frac{1}{2} + \eps$ accuracy.
\end{lemma}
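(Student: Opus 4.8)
The plan is to upper-bound the accuracy of the best tree in Top-$k$'s search space $\mathcal{T}_{k,h,\bS}$, which by \Cref{lem:best-in-search-space} equals the accuracy of Top-$k$'s output. I would first carry out the entire argument at the level of the population distribution $\mathcal{D}_{h,K}$, computing exact feature scores and leaf accuracies there, and only transfer to the empirical sample $\bS$ at the end: since $h$ is constant and $K = \tilde{O}(h^2)$, every node reachable within depth $h$ retains $\Omega(n/\mathrm{poly})$ points w.h.p., so for $n$ large every conditional mean, hence every score and every leaf accuracy, concentrates around its population value, and the strict score gaps established below survive.

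The crux is a structural claim: at every internal node reachable within depth $h$, Top-$k$ is \emph{forced} to branch on a Majority coordinate (from $x^{(2)}$) unless the path to that node is badly imbalanced. I would establish this in three steps. (i) Reduce the score comparison to a correlation comparison: under the uniform marginal, the score of feature $i$ at a node with label-mean $\mu$ equals $\mathcal{G}(\mu) - \tfrac12\mathcal{G}(\mu-\delta_i) - \tfrac12\mathcal{G}(\mu+\delta_i)$, where $\delta_i = \tfrac12(\Ex[\by\mid x_i=1] - \Ex[\by\mid x_i=0])$ is the correlation; since $\mathcal{G}$ is concave, this Jensen gap is non-decreasing in $|\delta_i|$, so at any fixed node Top-$k$ ranks features exactly by $|\delta_i|$, independent of which impurity function $\mathcal{G}$ is used (this is what makes the result hold for every impurity-based heuristic). (ii) Compute these correlations on $\mathcal{D}_{h,K}$: because $x^{(1)}\perp x^{(2)}$ and Top-$k$ has so far conditioned only on $x^{(2)}$-coordinates, each $x^{(1)}$-coordinate has correlation exactly $(1-\eps)\,\mathrm{Inf}(\tribes_h) = \Theta((1-\eps)\log h/h)$, unchanged by the conditioning, while each unqueried $x^{(2)}$-coordinate has correlation $\eps$ times its influence in the restricted Majority. (iii) Exploit symmetry: a restriction of Majority is a symmetric threshold on the remaining coordinates, so all unqueried $x^{(2)}$-coordinates share a common influence $\Theta\!\big(\tfrac{1}{\sqrt K}e^{-\Theta(s^2/K)}\big)$, where $s$ is the imbalance (number of $1$'s minus $0$'s) of the path so far. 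Hence, whenever $s$ is small enough that $\eps\cdot\Theta(K^{-1/2}e^{-\Theta(s^2/K)}) > (1-\eps)\,\Theta(\log h/h)$ — which holds for $|s|\le s^\ast := \Theta(\sqrt{K\log(\cdots)})$ and requires the upper bound $K = \tilde{O}(h^2)$ even at the root $s=0$ — all $\ge K-h\ge k$ remaining $x^{(2)}$-coordinates strictly outrank every $x^{(1)}$-coordinate, so the entire top-$k$ set lies in $x^{(2)}$. (The hypothesis $k\le K-h$ is exactly what guarantees at least $k$ unqueried Majority coordinates survive at every node of depth $\le h-1$.)

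I would then finish by charging the accuracy of an arbitrary $T'\in\mathcal{T}_{k,h,\bS}$ to two regions. Let $\mathcal{U}$ be the set of inputs whose root-to-leaf path in $T'$ ever branches on an $x^{(1)}$-coordinate; by the structural claim this can only occur at a node whose path-imbalance reached $s^\ast$, and a maximal-inequality Hoeffding bound over the at most $h$ prefixes gives $\Pr[\mathcal{U}]\le \eps$ precisely when $K = \tilde{\Omega}(h)$ — this is the role of the lower bound, forcing the imbalance needed to ever look at Tribes to be so extreme that inputs reaching such nodes carry negligible mass. On $\mathcal{U}^c$, $T'$ branches only on $x^{(2)}$, so its prediction is independent of $x^{(1)}$; since $\tribes_h$ can be taken $\eps$-balanced and is independent of $x^{(2)}$, the conditional label-mean at every such leaf is $\tfrac12 + O(\eps)$, so $T'$ is correct on at most a $\tfrac12+O(\eps)$ fraction there. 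Combining, every $T'$ has accuracy at most $(\tfrac12 + O(\eps)) + \Pr[\mathcal{U}] \le \tfrac12 + O(\eps)$, and rescaling $\eps$ yields the stated bound.

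I expect the main obstacle to be step (iii) together with the atypical-path accounting: controlling the restricted-Majority influences uniformly over all imbalances $s$ (a Gaussian/Stirling estimate of the relevant central binomial coefficients), and pinning down the $\tilde{\Omega}(h)$–$\tilde{O}(h^2)$ window so that simultaneously (a) every near-balanced node has Majority strictly beating Tribes and (b) the total mass of inputs reaching an imbalanced node where a Tribes coordinate could enter the top-$k$ is at most $\eps$. The secondary technical point is the sample-to-population transfer at the deeper nodes, ensuring the strict population-level score gaps persist empirically.
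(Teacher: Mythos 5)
Your proposal is correct and follows essentially the same route as the paper's proof: reduce score comparisons to correlation comparisons (valid for any impurity-based heuristic under the uniform marginal), show that on near-balanced paths every unqueried Majority coordinate outranks every Tribes coordinate so all $k$ candidates stay in $x^{(2)}$, bound the mass of imbalanced paths, and use the near-balance of $\tribes_h$ to cap the accuracy of $x^{(1)}$-oblivious leaves at $\tfrac{1}{2}+O(\eps)$. The only difference is that the paper outsources your step (iii) and the atypical-path accounting to a citation (Lemma 7.4 of the Blanc--Lange--Tan ITCS paper, giving the $1-O(K^{-2})$ bound on paths where unqueried $x^{(2)}$ coordinates retain correlation $\Omega(1/\sqrt{K})$), whereas you propose to prove that sub-lemma directly via Stirling estimates and a maximal inequality --- a self-contained but materially identical argument.
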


% \begin{theorem}[Generalization of~\Cref{thm:k-hierarchy}]
% \label{thm:k-hierarchy-general} 
% For every $k,K,h\in \N$ where $k < K$, there is a data distribution on which \textnormal{Top}-$K$ achieves at least 99\% accuracy with a depth budget of $h$, but \textnormal{Top}-$k$ achieves at most 51\% accuracy even with a depth budget of $h + (K-k-1)$.\lnote{Check that this is indeed the right bound.} 
% \end{theorem}
\section{Experiments}
\label{sec:experiments}

\paragraph{Setup for experiments.} At all places, the $\Top{1}$ tree that we compare to is that given by \texttt{scikit-learn}~\citep{scikit-learn}, which according to their documentation\footnote{\href{https://scikit-learn.org/stable/modules/tree.html\#tree-algorithms-id3-c4-5-c5-0-and-cart}{https://scikit-learn.org/stable/modules/tree.html\#tree-algorithms-id3-c4-5-c5-0-and-cart}}, is an optimized version of CART. We run experiments on a variety of datasets from the UCI Machine Learning Repository~\citep{UCI} (numerical as well as categorical features) having a size in the thousands and having $\approx 50-300$ features \violet{after binarization}. There were \violet{$\approx 100$} datasets meeting these criteria, and we took a random subset of $20$ such datasets. We binarize all the datasets -- for categorical datasets, we convert every categorical feature that can take on (say) $\ell$ values into $\ell$ binary features. For numerical datasets, we sort and compute thresholds for each numerical attribute, so that the total number of binary features is $\approx 100$.  A detailed description of the datasets is given in \Cref{appsec:dataset-details}.

We build decision trees corresponding to binary entropy as the impurity measure $\mathcal{H}$. In order to leverage existing engineering optimizations from state-of-the-art optimal decision tree implementations, we implement the $\topk$ algorithm given in \Cref{fig:pseudocode} via simple modifications to the PyDL8.5 \citep{aglin2020learning, aglin2021pydl8} codebase\footnote{\href{https://github.com/aia-uclouvain/pydl8.5}{https://github.com/aia-uclouvain/pydl8.5}}. Details about this are provided in \Cref{appsec:implementation-details}. Our implementation of the $\topk$ algorithm and other technical details for the experiments are available at \texttt{https://github.com/SullivanC19/pydl8.5-topk}.

%randomly sampled from the UCI Machine Learning Repository~\cite{UCI} (numerical as well as categorical features) having a size in the thousands and having $\approx 50-300$ features. We binarize all the datasets - for categorical datasets, we convert every categorical feature that can take on (say) $l$ values into $l$ binary features. For numerical datasets, we sort and compute thresholds for each numerical attribute for an appropriate number of thresholds. A detailed description of the datasets is given in \Cref{appsec:dataset-details}. For a fixed depth, we build decision trees corresponding to both gini and entropy as the impurity measure $\mathcal{H}$, and report numbers for whichever of the two performed better on the test set. A simple implementation of the $\topk$ algorithm and other technical details for the experiments will be made publicly available.

\subsection{Key experimental findings}

\paragraph{\violet{Small increments of $k$ yield significant accuracy gains.}}%$\topk$ gets better test accuracy than $\Top{1}$}

\begin{figure*}[t!]
    \centering
	\includegraphics[width=1\linewidth]{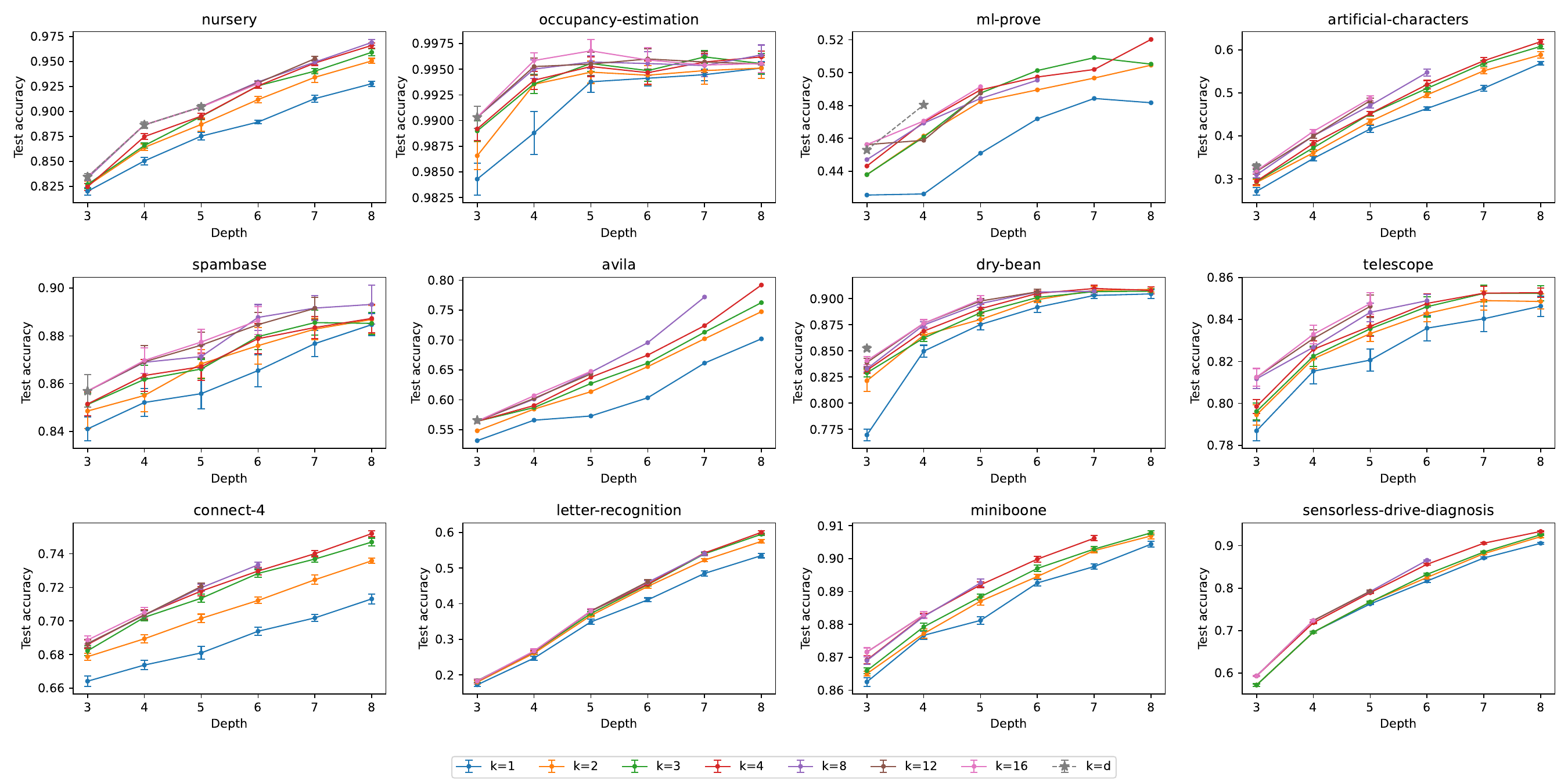}  
	\caption{Test accuracy comparison between $\topk$ for various values of $k$. We can see that Top-$(k+1)$ generally obtains higher accuracy than Top-$k$, and in some cases (e.g., nursery), $\Top{8/16}$'s accuracy is even comparable to the optimal tree ($\Top{d}$). Missing points in the plots correspond to settings that did not terminate within a sufficiently large time limit. All plots are averaged over 10 random train-test splits (except avila and ml-prove that have pre-specified splits) \violet{with confidence intervals plotted for 2 standard deviations}.}
	\label{fig:acc-topk-vs-top1}
\end{figure*}
%\lnote{Include brief interpretation of figures in captions.  E.g. something like ``We see that Top-$k+1$ generally obtains higher accuracy than Top-$k$ for $k=1,2,3$, and Top-$4$'s accuracy is even comparable to MurTree's."}
Since the search space of $\topk$ is a superset of that of $\Top{1}$ for any $k > 1$, the training accuracy of $\topk$ is guaranteed to be larger. The primary objective in this experiment is to show that $\topk$ can outperform $\Top{1}$ in terms of test accuracy as well. \Cref{fig:acc-topk-vs-top1} shows the results for $\Top{1}$ versus $\topk$ for $k=2,3,4,8,12,16,d$. Each plot is a different dataset, where on the x-axis, we plot the depth of the learned decision tree, and on the y-axis, we plot the test accuracy. Note that $k=d$ corresponds to the DL8.5 optimal decision tree.
%We also plot the test accuracy of an optimal decision tree (MurTree) in each plot as an additional point of reference.\footnote{This number is agnostic to what algorithm is computing the optimal tree, and hence we only compute it for MurTree, since it scales up much better than GOSDT (furthermore, the GOSDT tree is not exactly optimal unless the regularization coefficient is set to 0).}
We can clearly observe that the test accuracy increases as $k$ increases---in some cases, the gain is $> 5\%$ (absolute). \violet{Furthermore, for (smaller) datasets like nursery, for which we were able to run $k=d$, the accuracy of Top-$8/16$ is already very close to that of the optimal tree}.  

Lastly, since $\topk$ invests more computation towards fitting a better tree on the training set, its training time is naturally longer than $\Top{1}$.  \violet{However, \Cref{fig:time-topk-vs-top1} in \Cref{appsec:training-time-comparison-top-1}, which plots the training time, shows that the slowdown is mild.} % \red{links to Colin's experiment about running time, should be done before supplement deadline}

\paragraph{$\topk$ scales much better than optimal decision tree algorithms.}
% Despite having an optimality certificate, optimal decision tree algorithms suffer increased running time compared to $\Top{1}$.
Optimal decision tree algorithms suffer from poor runtime scaling.
%In particular, if we want to learn a tree of depth 6 (say), even heavily optimized state-of-the-art optimal trees like MurTree and GOSDT cannot scale beyond $\approx 200$ features on large datasets.
We empirically demonstrate that, in comparison, $\topk$ has a significantly better scaling in training time. Our experiments are identical to those in Figures 14 and 15 in the GOSDT paper \citep{GOSDT}, where two notions of scalability are considered. In the first experiment, we fix the number of samples and gradually increase the number of features to train the decision tree. In the second experiment, we include all the features, but gradually increase the number of training samples. The dataset we use is the FICO~\citep{FICO} dataset, which has a total of 1000 samples with 1407 binary features. We plot the training time (in seconds) versus number of features/samples for optimal decision tree algorithms (MurTree, GOSDT) and $\topk$ in \Cref{fig:optimal-trees}. We do this for depth $=4,5,6$ (for GOSDT, the regularization coefficient $\lambda$ is set to $2^{-\text{depth}}$). We observe that the training time for both MurTree and GOSDT increases dramatically compared to $\topk$, in both experiments. In particular, for depth $=5$, both MurTree and GOSDT were unable to build a tree on 300 features within the time limit of 10 minutes, while $\Top{16}$ completed execution even with all 1407 features. Similarly, in the latter experiment, GOSDT/MurTree were unable to build a depth-5 tree on 150 samples within the time limit, while $\Top{16}$ comfortably finished execution even on 1000 samples. These experiments demonstrates the scalability issues with optimal tree algorithms. Coupled with the accuracy gains seen in the previous experiment, $\topk$ can thus be seen as achieving a more favorable tradeoff between training time and accuracy.

We note, however, that various optimization have been proposed to allow optimal decision tree algorithms to scale to larger datasets.
For example, a more recent version of GOSDT has integrated a guessing strategy using reference ensembles which guides the binning of continuous features, tree size, and search \cite{GOSDT-guessing}.
Many of these optimizations are generally applicable across optimal tree algorithms and could be combined with $\topk$ for further improvement in performance.

%\footnote{We also ran an experiment comparing accuracy with Soft Decision Trees \citep{irsoy2012soft}. We found that their code took significantly longer to train (often 1-2 orders of magnitude) than both Top-$3$ and Top-$4$ trained to the same depth. For accuracy, it seems that each method has data sets where it performs better; however, when Soft Decision Trees has more accuracy, it is typically by a very small amount. In contrast, in the other cases, $\topk$ often has a drastic accuracy gain.}\gnote{This footnote about soft trees looks a bit out of place to me. I would vote to remove it and if a reviewer complains we can add it back.}

% For completeness, we also provide accuracy plots for this experiment in \Cref{appsec:optimal-trees-further-plots}. \cnote{Yet to include samples test accuracy plot in appendix.} \cnote{Do we want to include these at all?}
% \begin{itemize}
%     \item graph with top1/2/3 and gosdt showing gradual increase in slope
%     \item graph with x-axis being number of features and y-axis being the runtime
%     \item graph with x-axis being number of samples and y-axis being the runtime
% \end{itemize}

\begin{figure*}[t]
	\begin{subfigure}{.33\textwidth}
	\centering
	% include first image
	\includegraphics[width=1\linewidth]{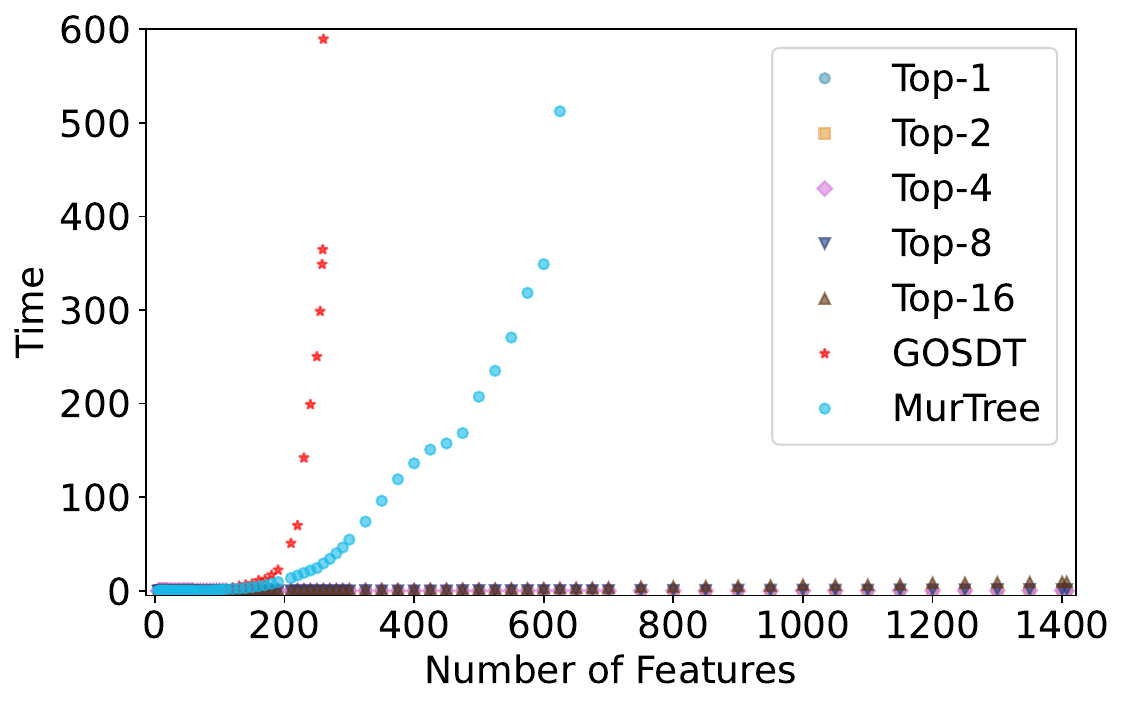}  
	\caption{Depth $=4$}
	\label{fig:optimal-trees-features-depth-4}
\end{subfigure}
	\begin{subfigure}{.33\textwidth}
	\centering
	% include third image
	\includegraphics[width=1\linewidth]{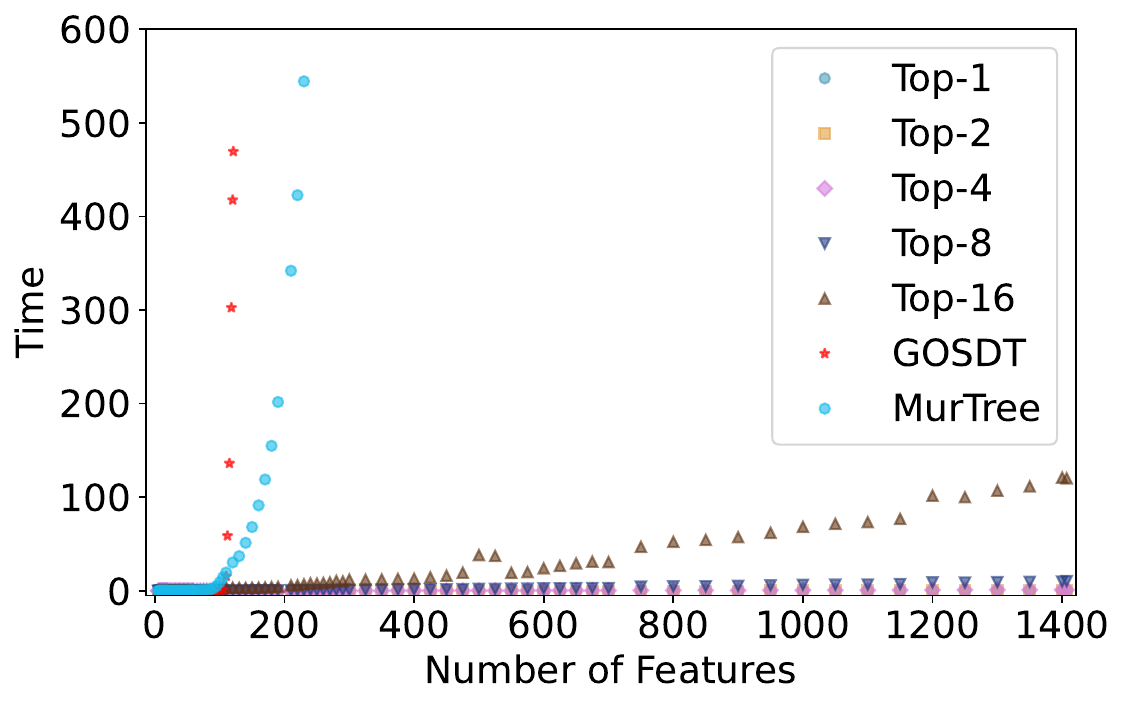}  
	\caption{Depth $=5$}
	\label{fig:optimal-trees-feautures-depth-5}
\end{subfigure}
	\begin{subfigure}{.33\textwidth}
	\centering
	% include first image
	\includegraphics[width=1\linewidth]{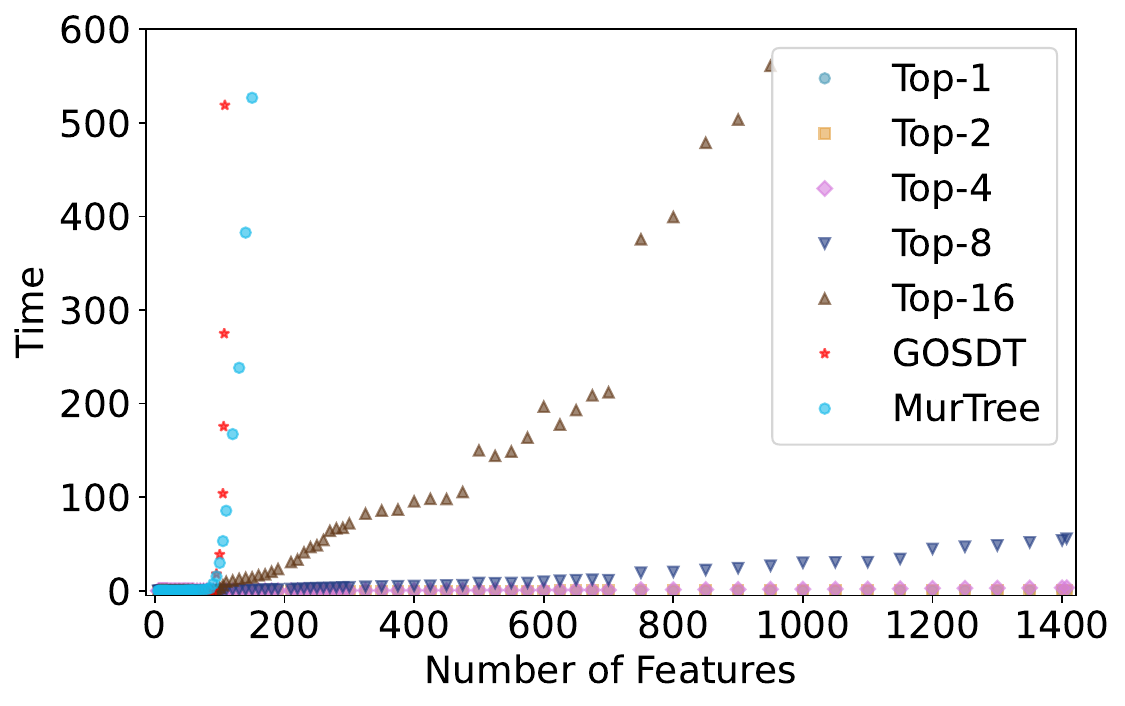}  
	\caption{Depth $=6$}
	\label{fig:optimal-trees-features-depth-6}
\end{subfigure}
\newline
	\begin{subfigure}{.33\textwidth}
	\centering
	% include first image
	\includegraphics[width=1\linewidth]{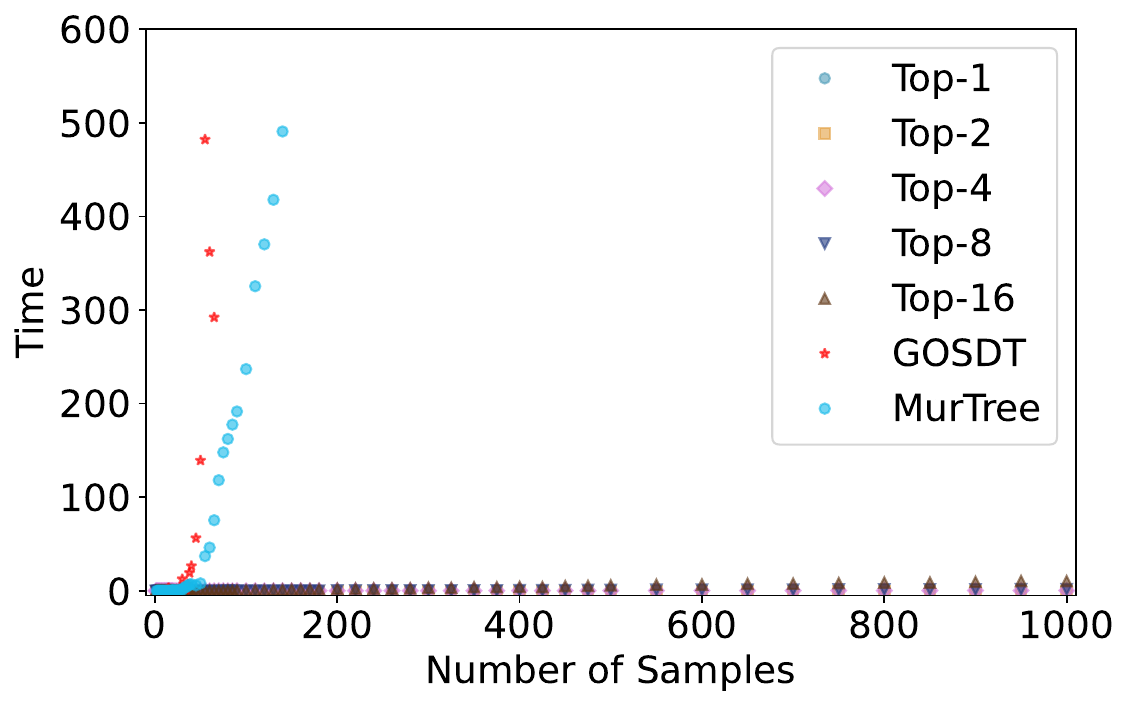}  
	\caption{Depth $=4$}
	\label{fig:optimal-trees-samples-depth-4}
\end{subfigure}
	\begin{subfigure}{.33\textwidth}
	\centering
	% include third image
	\includegraphics[width=1\linewidth]{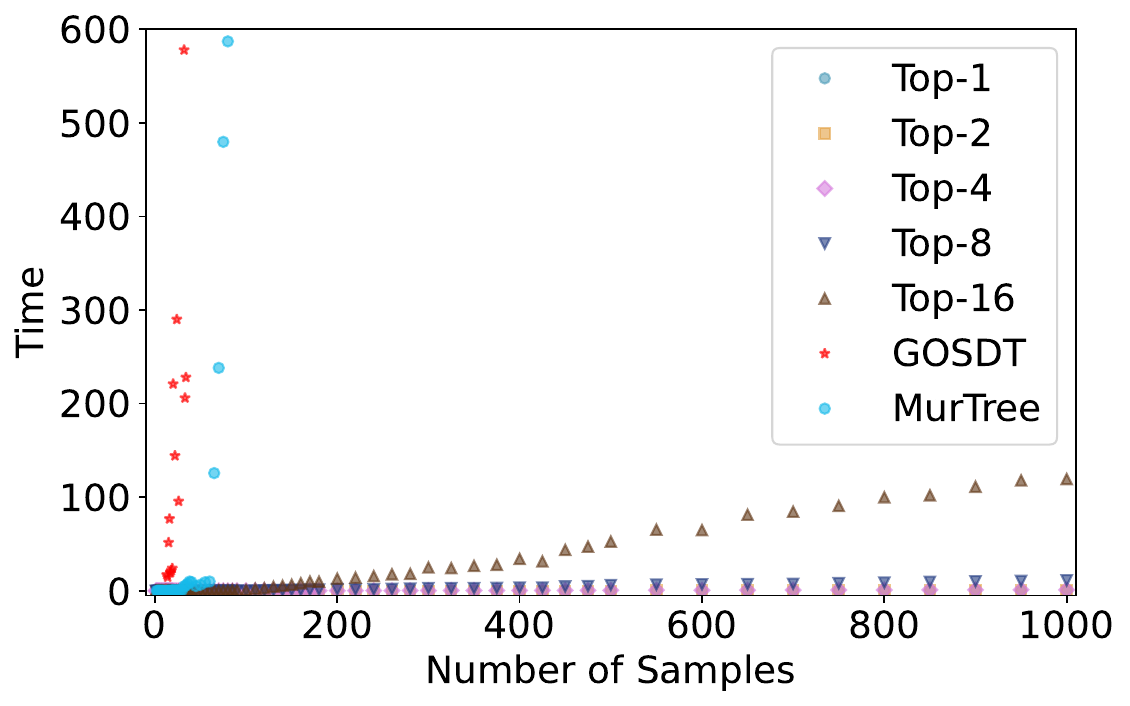}  
	\caption{Depth $=5$}
	\label{fig:optimal-trees-samples-depth-5}
\end{subfigure}
	\begin{subfigure}{.33\textwidth}
	\centering
	% include first image
	\includegraphics[width=1\linewidth]{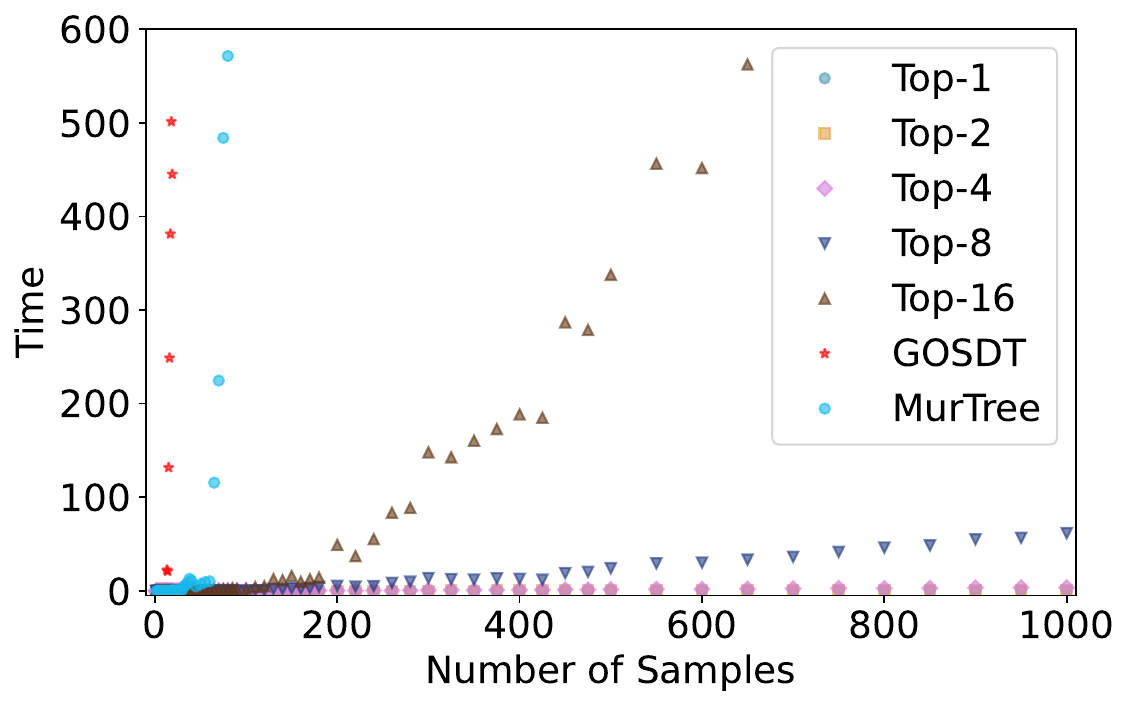}  
	\caption{Depth $=6$}
	\label{fig:optimal-trees-samples-depth-6}
\end{subfigure}
	\caption{Training time comparison between $\topk$ and optimal tree algorithms. As the number of features/samples increases, both GOSDT and MurTree scale poorly compared to $\topk$, and beyond a threshold, do not complete execution within the time limit.}
	\label{fig:optimal-trees}
\end{figure*}

\begin{figure*}[t!]
    \centering
	\includegraphics[width=1\linewidth]{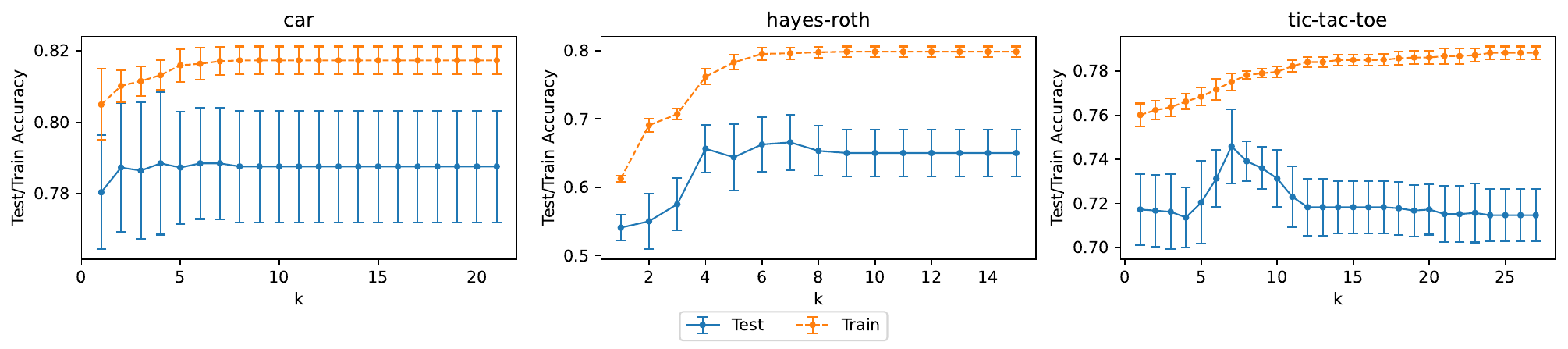}  
	\caption{Test accuracy plateaus for large $k$. All runs averaged over 10 random train-test splits with maximum depth fixed to 3.}
	\label{fig:increasing-k-acc-plateau}
\end{figure*}

% \paragraph{Accuracy given a runtime budget}
% (maybe include)
% \begin{itemize}
%     \item Colors are different choices of $k$
%     \item $x$-axis is runtime (maybe on log-scale? tbd)
%     \item $y$-axis is accuracy
%     \item Should be able to see what depth is by counting the dots from the $y$-axis
% \end{itemize}

\paragraph{Increasing $k$ beyond a point does not improve test accuracy.}%\gnote{I rewrote this? Chirag do you like?}\cnote{yes looks good}
In our experiments above, we ran $\Top{k}$ only till $k=16$: in \Cref{fig:increasing-k-acc-plateau}, we show that increasing $k$ to very large values, which increases runtime, often does not improve test accuracy, and in some cases, may even \textit{hurt} due to overfitting. For 3 datasets -- car, hayes-roth and tic-tac-toe -- we plot train and test error as a function of $k$. Naturally, the train accuracy monotonically increases with $k$ in each plot. However, for both car and hayes-roth, we can observe that the test accuracy first increases and then plateaus. Interestingly, for tic-tac-toe, the test accuracy first increases and then \textit{decreases} as we increase $k$. These experiments demonstrate that selecting too large of a $k$, as optimal decision tree algorithms do, is a waste of computational resources and can even hurt test accuracy via overfitting.

%The experiment consists of increasing $k$ from $1$ all the way to $d$, and plotting both the train and test accuracies in a single plot. We do this for 3 datasets: car, hayes-roth and tic-tac-toe; \Cref{fig:increasing-k-acc-plateau} has the results. Naturally, the train accuracy monotonically increases with $k$ in each plot. However, for both car and hayes-roth, we can observe that the test accuracy first increases and then plateaus. Interestingly, for tic-tac-toe, the test accuracy first increases and then \textit{decreases} as we increase $k$. \violet{These experiments demonstrate} 

%Thus, it is important to tune $k$ properly\gnote{Maybe just end this sentence here}, to reap the benefits of both higher test accuracy and efficient training.

\section{Conclusion}

We have shown how popular and empirically successful greedy decision tree learning algorithms can be improved with {\sl the power of choices}: our generalization, Top-$k$, considers the~$k$ best features as candidate splits instead of just the single best one.  As our theoretical and empirical results demonstrate, this simple generalization is powerful and enables significant accuracy gains while preserving the efficiency and scalability of standard greedy algorithms. Indeed, we find it surprising that such a simple generalization has not been considered before.

There is much more to be explored and understood, both theoretically and empirically; we list here a few concrete directions that we find particularly exciting and promising.  First, we suspect that power of choices affords more advantages over greedy algorithms than just accuracy gains.  For example, an avenue for future work is to show that the trees grown by Top-$k$ are more {\sl robust to noise}.  Second, are there principled approaches to the automatic selection of the greediness parameter~$k$?  Can the optimal choice be inferred from a few examples or learned over time?  This opens up the possibility of new connections to machine-learned advice and algorithms with predictions~\citep{MV20}, an area that has seen a surge of interest in recent years.  Finally, as mentioned in the introduction,  standard greedy decision tree algorithms are at the very heart of modern tree-based ensemble methods such as XGBoost and random forests.  A natural next step is to combine these algorithms with Top-$k$ and further extend the power of choices to these settings.

 \section*{Acknowledgements}

We thank the NeurIPS reviewers and AC for their detailed and helpful feedback. 

Guy and Li-Yang are supported by NSF awards 1942123, 2211237, 2224246 and a Google Research Scholar award. Jane is supported by NSF Graduate Research Fellowship under Grant No.~2141064, NSF Awards CCF-2006664, DMS-2022448, and Microsoft. Mo is supported by a Stanford Interdisciplinary Graduate Fellowship and a Stanford Data Science Scholarship. Chirag is supported by Moses Charikar and Greg Valiant's Simons Investigator Awards.

%\newpage
\bibliography{neurips2023_conference}{}
\bibliographystyle{alpha} % Not sure if we should change this to something else

\newpage
\appendix

\newpage
\section{Proofs deferred from \Cref{sec:algorithm}}
\label{appsec:searchspace}
\begin{proof}[Proof of \Cref{lem:best-in-search-space}]
    By induction: When $h = 0$, the only trees in the search space are the constant $0$ and constant $1$ functions. $\Top{k}$ returns which of these two trees is the most accurate.
    
    When $h \geq 1$, let $T'$ be a tree with maximal accuracy within $\mathcal{T}_{k,h,S}$. As $T'$ is in the search space, its root must be one of the $k$ coordinates with maximal score which form the candidate set $\mathcal{I}$. 
    
    For each coordinate $i \in \mathcal{I}$, the candidate tree $T_i$ satisfies 
    \[\Prx_{\bx,\by \sim S}[T_i(\bx) \ne \by] = \Prx_{\bx \sim S}[x_i = 0]\Prx_{\bx,\by \sim S}[T_{i0}(\bx) \ne \by] + \Prx_{\bx\sim S}[x_i = 1]\Prx_{\bx,\by \sim S}[T_{i1}(\bx) \ne \by],\]
    where $T_{i0}$ and $T_{i1}$ are the left and right subtrees of $T_i$ respectively. 
    Each of $T_{i0}$ and $T_{i1}$ is an output of $\topk$ with a depth budget of $h-1$.
    We assume as the inductive hypothesis that each of these trees minimizes error among all trees in $\mathcal{T}_{k, h-1, S_{x_i = 0}}$ and $\mathcal{T}_{k, h-1, S_{x_i = 1}}$ respectively;
    therefore the candidate $T_i$ minimizes error among all trees in $\mathcal{T}_{k, h, S}$ that have $x_i$ at the root. 
    Since $\topk$ chooses the most accurate of the $T_i$'s, 
    it follows that the chosen tree minimizes error among all trees in $\mathcal{T}_{k, h, S}$.
\end{proof}

\section{Proofs deferred from \Cref{sec:theory}}
\label{sec:theory-appendix}

\paragraph{Setup and notation:} We use $\Ind[\cdot]$ for the indicator function, and $[d]$ to refer to the set $\{1, \ldots, d\}$.

%It will be helpful for us to define \emph{restrictions}, which naturally specify the subset of the domain corresponding to a single path through a decision tree. A restriction is specified by a partial function, $\alpha:[d]\rightharpoonup\{0,1\}$ indicating which (if any) coordinates are fixed to specific values. That restriction corresponds to the subset of the domain $X_\alpha \subseteq X$ where $x \in X_{\alpha}$ if, for all $i$ in the domain of the partial function $\alpha$, $x_i = \alpha(i)$. We'll also use restricted distributions $\mathcal{D}_\alpha$, where to sample $(\bx, \by) \sim \mathcal{D}_{\alpha}$, we sample $(\bx,\by) \sim \mathcal{D}$ conditioned on $\bx \in X_{\alpha}$.
%\gnote{Not sure if we will need restrictions are not}

For brevity, we will make two simplifying assumptions about $\Top{k}$:
\begin{enumerate}
    \item We will assume $\Top{k}$ builds \emph{non-redundant} trees, meaning on every root-to-leaf path, each coordinate is queried at most once. This is easy to enforce in the pseudocode: at each step, the algorithm can track a set $Q$ of the coordinates already queried along this path, and pick the top-$k$ coordinates according to the feature score function among $[d] \setminus Q$. For brevity, we do not include that modification to the pseudocode in \Cref{fig:pseudocode}.
    % \item We assume, roughly speaking, that $\Top{k}$ has access to an infinite-sized sample. More precisely, whenever $\Top{k}$ needs to compute an expectation over its sample (to determine the set of $k$ coordinates maximizing the scoring function, or to decide which constant function to put at a leaf), we replace the the empirical expectation with the population expectation. Using standard techniques, if the sample is large enough, these expectations will concentrate and our results still hold with high probability.

    % As discussed earlier (\textbf{reference}), when the input distribution is uniform on $\zo^d$, the $k$ variables selected by $\Top{k}$ to maximize the impurity-based heuristic are the $k$ coordinates $i \in [d]$ maximizing the \emph{empirical} correlation, $\Ex_{(\bx ,\by) \sim S}[\bx_i \by]$. Instead, for simplicity, we will assume $\Top{k}$ chooses the $k$ coordinates maximizing the \emph{population} correlation, $\Ex_{(\bx ,\by) \sim \mathcal{D}}[\bx_i \by]$. Using standard techniques, if the sample is large enough, the empirical correlation will concentrate around the population correlation and our results still hold. Similarly, we will assume that in the base case, $\Top{k}$ returns the constant function with best accuracy w.r.t. 
    \item We assume that $\Top{k}$ always build \emph{complete} trees (i.e every root-to-leaf path has depth exactly $h$). This is without loss of generality, as whenever $\Top{k}$ stops early, it does so because it has already achieved perfect accuracy on that path.
\end{enumerate}

\violet{Furthermore, $\Top{k}$ only uses the information in its sample in two ways: first, it uses the sample to compute the feature scoring function $\mcH(S, i)$. Second, when $h=0$, it uses the sample to determine whether the constant $0$ or constant $1$ fits the sample better. Both of these are ``statistical queries" \cite{Kea98SQ}, meaning the interaction the algorithm receives from the sample is simply the expectations $\Ex_{(\bx,\by) \sim S}[\phi_i(\bx,\by)]$ where $\phi_1,\ldots, \phi_t:\zo^{d+1} \to [0,1]$ are a sequence of queries. For any $\eps,\delta > 0$, by a standard concentration argument and union bound, for large enough sample size $n \geq n(\eps,\delta)$,
\begin{equation*}
    \Prx_{\bS \sim \mcD^n}\bracket*{\max_{i \in [t]}{\abs*{\Ex_{(\bx,\by)\sim \bS}[\phi_i(\bx,\by)]-\Ex_{(\bx,\by)\sim \mcD}[\phi_i(\bx,\by)]} \geq \eps}} \leq \delta.
\end{equation*}
Therefore, for sufficiently large sample size, we are free to assume that when the algorithm computes $\Ex_{(\bx,\by)\sim \bS}[\phi_i(\bx,\by)]$, it receives $\Ex_{(\bx,\by)\sim \mcD}[\phi_i(\bx,\by)]$ with high probability. This is a standard argument (c.f. \cite{KM96}), and so we will work directly with expectations from $\mcD$ in our proof to ease notation.}

\violet{Recall that \Cref{thm:k-hierarchy,thm:k-hierarchy-general,thm:monotone-hierarchy-intro} hold whenever the feature scoring function is an impurity-based heuristic.}%\gnote{Previously, using an impurity based heuristic was an ``assumption" that only appeared in our proof. I changed it so it's part of the theorem itself.} 
As our data distribution is uniform on the input, we are able to use the following fact and simultaneously prove results for all impurity-based heuristic:

\begin{fact}[Proposition 7.7 of \cite{BLT-ITCS}]
    \label{fact:score-correlation-monotone}
    If the scoring function is \emph{any} impurity-based heuristic, and the data distribution is uniform over inputs ($\bx$ is uniform when $(\bx,\by) \sim \mathcal{D}$), then the score of a coordinate $i$ is monotone increasing with its correlation with the label, $\Ex_{(\bx,\by)\sim \mathcal{D}}[\bx_i \by]$.
\end{fact}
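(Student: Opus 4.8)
The plan is to exploit two consequences of the input distribution being uniform. First, since $\Prx[\bx_i = 0] = \Prx[\bx_i = 1] = \tfrac12$ for every coordinate, the score of \Cref{def:impurity} collapses to
\[
\mathcal{H}(\mathcal{D}, i) = \mathcal{G}(\mu) - \tfrac12\mathcal{G}(\mu_0) - \tfrac12\mathcal{G}(\mu_1),
\]
where $\mu := \Ex[\by]$ and $\mu_b := \Ex[\by \mid \bx_i = b]$. Second, $\mu$ is the global label mean and hence does not depend on $i$, so the ``parent'' term $\mathcal{G}(\mu)$ is a constant common to all coordinates; comparing the scores of two coordinates is therefore the same as comparing (with opposite sign) their weighted child impurities $\tfrac12\mathcal{G}(\mu_0) + \tfrac12\mathcal{G}(\mu_1)$. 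Uniformity also forces $\mu = \tfrac12(\mu_0 + \mu_1)$, so the pair $(\mu_0, \mu_1)$ always straddles the fixed midpoint $\mu$, and I can write $\mu_1 = \mu + g/2$ and $\mu_0 = \mu - g/2$ for the gap $g := \mu_1 - \mu_0 \ge 0$ (taking $\bx_i = 1$ to be the more label-rich side). A direct computation shows that the correlation $\Ex[\bx_i\by]$ is an increasing affine function of this gap $g$ (the constant and slope depending only on $\mu$ and on whether features are encoded in $\zo$ or $\{-1,1\}$), so it suffices to prove that the score is monotone non-decreasing in $g$.

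The heart of the argument is then a purely one-dimensional claim about concave functions: with the sum $\mu_0 + \mu_1 = 2\mu$ held fixed, $\mathcal{G}(\mu_0) + \mathcal{G}(\mu_1)$ is non-increasing as the pair spreads apart, i.e.\ as $g$ grows. I would prove this without assuming $\mathcal{G}$ is differentiable, using the chord-slope characterization of concavity. Concretely, for $g' > g \ge 0$ set $\delta := (g' - g)/2 > 0$; then the more-spread pair is $(\mu_0 - \delta, \mu_1 + \delta)$, and the desired inequality $\mathcal{G}(\mu_0 - \delta) + \mathcal{G}(\mu_1 + \delta) \le \mathcal{G}(\mu_0) + \mathcal{G}(\mu_1)$ rearranges to
\[
\frac{\mathcal{G}(\mu_0) - \mathcal{G}(\mu_0 - \delta)}{\delta} \ \ge\ \frac{\mathcal{G}(\mu_1 + \delta) - \mathcal{G}(\mu_1)}{\delta}.
\]
Since $\mu_0 - \delta < \mu_0 \le \mu \le \mu_1 < \mu_1 + \delta$, the left chord lies entirely to the left of the right chord, and concavity of $\mathcal{G}$ says chord slopes are non-increasing in position, which is exactly this inequality. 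Feeding this back through the score formula gives $\mathcal{H}(\mathcal{D}, i)$ non-decreasing in $g$, hence in the correlation.

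The main subtlety to get right is the sign/regime issue rather than any hard computation. Because $\mathcal{G}$ is symmetric about $\tfrac12$ and the score depends on the pair only through $|g|$, the monotone-increasing relationship with $\Ex[\bx_i\by]$ is the statement for the branch $g \ge 0$ (equivalently, features whose $\bx_i = 1$ side is at least as label-rich as the $\bx_i = 0$ side). This is precisely the regime that arises in our applications: for monotone data distributions every coordinate satisfies $\mu_1 \ge \mu_0$ by the definition of monotonicity, and in the parity construction the coordinates are compared through the magnitude of their correlation. I would therefore state and use the fact in this form, and finish by checking the degenerate endpoints (e.g.\ $\mu \in \{0,1\}$, or a coordinate with $\mu_0 = \mu_1$), where the conditional means collide at the midpoint and the inequality holds trivially.
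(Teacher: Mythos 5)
The paper never proves this fact---it is imported wholesale as Proposition 7.7 of \cite{BLT-ITCS}---so your proposal should be judged as a self-contained reconstruction, and as such it is correct and is essentially the argument one expects behind the citation: uniformity makes both branch probabilities $\frac{1}{2}$ and forces the conditional means $\mu_0,\mu_1$ to be symmetric about the coordinate-independent global mean $\mu$, so comparing scores across coordinates reduces to the single gap parameter $g = \mu_1 - \mu_0$; the correlation $\Ex_{(\bx,\by)\sim\mathcal{D}}[\bx_i\by]$ is affine increasing in $g$; and concavity of $\mathcal{G}$ makes $\mathcal{G}(\mu - g/2) + \mathcal{G}(\mu + g/2)$ non-increasing as the pair spreads. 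Your chord-slope derivation of that last step is valid (the intervals $[\mu_0-\delta,\mu_0]$ and $[\mu_1,\mu_1+\delta]$ are correctly ordered once $g \ge 0$) and has the merit of not assuming $\mathcal{G}$ is differentiable, which matters since \Cref{def:impurity} requires only concavity. You also caught a genuine imprecision in the statement as quoted: the score is even in $g$ (this is automatic from swapping $\mu_0$ and $\mu_1$, incidentally, and does not need the symmetry of $\mathcal{G}$ about $\frac{1}{2}$ that you invoke), while the signed correlation is odd in $g$, so the fact as literally written fails for coordinates anticorrelated with the label; the correct statement is monotonicity in the \emph{magnitude} of the correlation, equivalently in the signed correlation on the branch $g \ge 0$. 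Your observation that only this branch is exercised in the paper---correlations are nonnegative for the monotone Tribes/Majority distribution by monotonicity, and zero or positive for the parity construction---is exactly the right repair and matches how \Cref{fact:score-correlation-monotone} is actually applied after the statistical-query reduction. I see no gaps.
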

Intuitively, \Cref{fact:score-correlation-monotone} means that, when analyzing $\Top{k}$ on uniform data distributions, we are free to replace the ``$k$ coordinates with largest scores" with the ``$k$ coordinates with largest correlations."

\subsection{Proofs deferred from \Cref{subsec:nonmonotone}}

The stochastic function $\boldf_{h,K}$ used throughout \Cref{lem:top-k-succeed-nonmonotone} and \Cref{lem:top-k-fail-nonmonotone} combines a function that outputs a random one of $k$ features with the $h$-wise parity function.
\begin{definition}[Parity]
    The \emph{parity} function of $\ell$ variables, indicated by $\mathrm{Par}_\ell: \zo^\ell \to \zo$, returns
    \begin{equation*}
        \mathrm{Par}_\ell(x) \coloneqq \bigg(\sum_{i \in [\ell]} x_i\bigg) \mod 2.
    \end{equation*}
\end{definition}

% \brown{
% \begin{proposition}[Parity has $0$ correlation with each feature]
%     \label{prop:parity-corr}
%     For any $\ell \geq 2$ and $i \in [\ell]$,
%     \begin{equation*}
%         \Ex_{\bx \sim \zo^\ell}[\bx_i \cdot \mathrm{Par}_\ell(\bx)] = 0.
%     \end{equation*}
% \end{proposition}
% \begin{proof}
    
% \end{proof}

% }

\begin{fact}[Computing any function with a complete tree]
    \label{fact:complete-tree}
    Let $f:\zo^d \to \zo$ be any function that only depends on the first $h$ variables, meaning there is some $g:\zo^h \to \zo$ such that:
    \begin{equation*}
        f(x) = g(x_{[1:h]})
    \end{equation*}
    for all $x \in \zo^d$. Let $T$ be any non-redundant complete tree of depth-$h$ in which every internal node is one of the first $h$ coordinates. Then, there is a way to label the leaves of $T$ such that $T$ exactly computes $f$.
\end{fact}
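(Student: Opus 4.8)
The plan is to show that the leaves of $T$ are in bijection with the $2^h$ possible settings of the first $h$ coordinates, and then label each leaf by the value of $g$ on the corresponding setting.

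First I would observe that because $T$ is complete of depth $h$, every root-to-leaf path contains exactly $h$ internal nodes. By the non-redundancy assumption, the coordinates queried along any single path are pairwise distinct, and by hypothesis each of them lies in $\{1, \ldots, h\}$. Since such a path has $h$ internal nodes querying $h$ distinct coordinates drawn from the size-$h$ set $\{1, \ldots, h\}$, every path must in fact query each of $x_1, \ldots, x_h$ exactly once, in some path-dependent order.

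Next I would argue that the route an input $x$ takes from the root to a leaf is determined entirely by $x_{[1:h]}$, since every internal node it encounters tests one of the first $h$ coordinates. Conversely, reaching a given leaf pins down the values of all $h$ of these coordinates, each being tested exactly once along the way, so distinct leaves correspond to distinct assignments to $x_{[1:h]}$. As $T$ has exactly $2^h$ leaves and there are exactly $2^h$ assignments in $\zo^h$, this correspondence $\ell \mapsto a(\ell) \in \zo^h$ is a bijection. I would then label each leaf $\ell$ with the value $g(a(\ell))$. For any $x \in \zo^d$, the computation of $T$ on $x$ follows the unique path determined by $x_{[1:h]}$ and terminates at the leaf $\ell$ with $a(\ell) = x_{[1:h]}$; the output read off is therefore $g(x_{[1:h]}) = f(x)$, as required.

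The main obstacle, and really the only nonroutine point, is the counting argument in the first step: one must make explicit that the combination of \emph{depth exactly $h$}, \emph{non-redundancy}, and \emph{restriction to the first $h$ coordinates} forces every root-to-leaf path to query all $h$ of those coordinates. This is what upgrades the leaf-to-assignment map from a mere surjection (which would already follow from the path depending only on $x_{[1:h]}$) to a genuine bijection, and hence is what guarantees a \emph{consistent} leaf labeling exists. Everything after this point is bookkeeping.
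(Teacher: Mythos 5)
Your proof is correct and follows essentially the same route as the paper's: both arguments use non-redundancy plus the depth-$h$, complete, first-$h$-coordinates-only hypotheses to conclude that every root-to-leaf path queries each of $x_1,\dots,x_h$ exactly once, putting leaves in bijection with assignments to $x_{[1:h]}$ and then labeling each leaf by the corresponding value of $g$. Your writeup simply makes the counting step and the bijection more explicit than the paper's terser version.
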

\begin{proof}
    Since $T$ is non-redundant, each coordinate is queried at most once on each root-to-leaf path. $T$ is complete and depth-$h$, so each of the first $h$ coordinates must be queried \emph{exactly} once on each root-to-leaf path. Therefore, each leaf of $T$ corresponds to exactly one way to set the first $k$ coordinates of $x$. If the leaf is labeled by the output of $g$ given those first $k$ coordinates, $T$ will exactly compute $f$.
\end{proof}

\begin{proof}[Proof of \Cref{lem:top-k-succeed-nonmonotone}]
The function $\mathrm{Par}_{h}(x^{(1)})$ is a $(1-\eps)$-approximation to $f$,
so it suffices to show that the depth-$h$ tree for $\mathrm{Par}_{h}(x^{(1)})$ is within the search space of $\Top{K}$ when run to a depth of $h$. Then we can apply \Cref{lem:best-in-search-space} to reach the desired result. 

There are only $K-1$ variables not in $x^{(1)}$, 
so each set of $K$ candidate variables must contain some variable in $x^{(1)}$.
Since $\Top{K}$ is non-redundant, this must be a variable
that has not yet been queried higher in the tree.
Thus, at every step $\Top{K}$ will always try a candidate variable that reduces the number of relevant $x^{(1)}$-variables by 1.
It follows that the complete nonadaptive tree of depth $h$, 
containing all the variables of $x^{(1)}$, 
is within the search space,
so by \Cref{fact:complete-tree} there is a tree in the search space that computes $\mathrm{Par}_h(x^{(1)})$ exactly.
Then the accuracy of the output must be at least the total accuracy of this tree, which is $(1-\eps)$.
\end{proof}

\begin{proof}[Proof of \Cref{lem:top-k-fail-nonmonotone}]
Conditioned on any setting of $< k$ variables, for any variable $x_i$ in $x^{(2)}$, $\E[f(x)x_i] \ge 1/k$.
Similarly, for any variable $x_j$ in $x^{(1)}$, $\E[f(x)x_j] = 0$. 
By \Cref{fact:score-correlation-monotone},
at every node the variables of $x^{(2)}$ that have not yet been queried
all rank ahead of the variables of $x^{(1)}$.%\gnote{This only applies to variables in $x^{(1)}$ that have not already been queried}
Thus, if at most $K - k$ variables have already been queried, 
the remaining $k$ most-correlated candidates will all be from $x^{(2)}$,
so no variable in $x^{(1)}$ will be considered.
Thus, at least $K - k$ variables from $x^{(2)}$ will be placed in every path. \\

Since the depth budget $h'$ is smaller than $h + K - k$
and at least $K - k$ variables from $x^{(2)}$ are placed in every path,
no path can contain all of the $h$ variables of $x^{(1)}$.
The value of $\mathrm{Par}_{h}(x^{(1)})$ is 0 with probability 1/2 and 1 with probability 1/2 
conditioned on the values of any set of variables smaller than $h$. 
Therefore, the tree built by $\topk$ cannot achieve accuracy better than 1/2 on the parity portion of the function
(and thus have accuracy better than $(1/2  + \eps)$ overall).

\end{proof}

\subsection{Proofs deferred from \Cref{subsec:monotone}}
\label{subsec:monotone-appendix}

The data distribution showing the accuracy separation between $\Top{K}$ and $\Top{k}$ is formed by combining the Majority and Tribes functions.
\begin{definition}[Majority]
    The \emph{majority} function of $\ell$ variables, indicated by $\maj_\ell: \zo^\ell \to \zo$, returns
    \begin{equation*}
        \maj_\ell(x) \coloneqq \Ind[\text{at least half of $x$'s coordinates are $1$}].
    \end{equation*}
\end{definition}

\begin{definition}[Tribes]
    For any input length $\ell$, let $w$ be the largest integer such that $(1 - 2^{-w})^{\ell/w} \leq 1/2$. For $x \in \zo^\ell$, let $x^{(1)}$ be the first $w$ coordinates, $x^{(2)}$, the second $w$, and so on. $\tribes_\ell$ is defined as
    \begin{equation*}
        \tribes_{\ell}(x) \coloneqq (x^{(1)}_1 \land \cdots \land x^{(1)}_w) \lor \cdots \lor (x^{(t)}_{1} \land \cdots \land x^{(t)}_{w}) \quad \quad \text{where } t \coloneqq \left \lfloor \frac{\ell}{w} \right\rfloor.
    \end{equation*}
\end{definition}
For our purposes, it is sufficient to know a few simple properties about $\tribes$. These are all proven in \citep[\S 4.2]{ODBook}.
\begin{fact}[Properties of $\tribes$]\leavevmode
    \label{fact:tribes-properties}
    \begin{enumerate}
        \item $\tribes_\ell$ is monotone.
        \item $\tribes_\ell$ is nearly balanced:
        \begin{equation*}
            \Ex_{\bx \sim \zo^\ell}[\tribes_\ell(\bx)] = \frac{1}{2} \pm o(1)
        \end{equation*}
        where the $o(1)$ term goes to $0$ as $\ell$ goes to $\infty$.
        \item All variables in $\tribes_\ell$ have small correlation: For each $i \in [\ell]$,
        \begin{equation*}
            \Cov_{\bx \sim \zo^\ell}[\bx_i, \tribes_\ell(\bx)] = O\left(\frac{\log \ell}{\ell}\right).
        \end{equation*}
    \end{enumerate}
\end{fact}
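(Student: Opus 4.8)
The plan is to establish the three properties in turn, treating monotonicity as a purely structural fact and then reducing both the balance and the correlation bound to a single quantitative estimate on the probability $(1-2^{-w})^t$ that every tribe fails.

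\emph{Monotonicity.} Since each coordinate of $\bx \sim \zo^\ell$ is an independent fair bit and $\tribes_\ell$ is a deterministic Boolean function, $\Ex[\tribes_\ell(\bx)]$ is simply its value, so it suffices to show $\tribes_\ell(x)\le \tribes_\ell(x')$ whenever $x \preceq x'$. This is immediate from the read-once formula structure: each tribe $x^{(a)}_1\land\cdots\land x^{(a)}_w$ is a conjunction of uncomplemented variables, hence monotone (if it is $1$ at $x$ then all its variables are $1$, and they remain $1$ at any $x'\succeq x$), and $\tribes_\ell$ is the disjunction of these conjunctions, so it is monotone as an $\lor$ of monotone functions.

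\emph{Balance.} Because the $t=\lfloor \ell/w\rfloor$ tribes act on disjoint blocks of coordinates they are mutually independent, and a conjunction of $w$ fair bits is $1$ with probability exactly $2^{-w}$; hence
\[
\Pr[\tribes_\ell(\bx)=0]=(1-2^{-w})^t,\qquad \Ex[\tribes_\ell(\bx)]=1-(1-2^{-w})^t.
\]
The task is thus to show $(1-2^{-w})^t=\tfrac12\pm o(1)$, and I would first fix the scale of the parameters. Using $\ln(1-u)=-u(1+o(1))$ as $u=2^{-w}\to 0$, the defining condition $(1-2^{-w})^{\ell/w}\le\tfrac12<(1-2^{-(w+1)})^{\ell/(w+1)}$ forces $w=\log_2\ell-\Theta(\log\log\ell)$ and $2^{-w}=\Theta(\tfrac{\log\ell}{\ell})$, and it pins the product $t\,2^{-w}$ to within a constant factor of $\ln 2$. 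Given $t\,2^{-w}=\ln 2\pm o(1)$, taking logarithms yields $\ln\big((1-2^{-w})^t\big)=-t\,2^{-w}(1+o(1))=-\ln 2+o(1)$ and hence $(1-2^{-w})^t=\tfrac12+o(1)$; the passage from the real exponent $\ell/w$ used in the definition to the integer $t$ costs only a factor $(1-2^{-w})^{\pm1}=1\pm o(1)$ and is harmless. I expect the main obstacle to be precisely the calibration $t\,2^{-w}=\ln 2\pm o(1)$: the ``largest integer'' condition alone controls this product only up to a constant, and sharpening it to $1\pm o(1)$ accuracy is the delicate quantitative step, carried out in the standard Tribes analysis of O'Donnell \S4.2 that I would follow.

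\emph{Small correlation.} Fix a coordinate $i$, lying in tribe $a$. Conditioning on $\bx_i$ and using $\Ex[\bx_i]=\tfrac12$, a short calculation reduces the covariance to the (monotone) influence of $\bx_i$:
\[
\Cov[\bx_i,\tribes_\ell]=\tfrac14\big(\Pr[\tribes_\ell=1\mid\bx_i=1]-\Pr[\tribes_\ell=1\mid\bx_i=0]\big)=\tfrac14\,\Pr[\bx_i\text{ is pivotal}].
\]
By monotonicity $\bx_i$ is pivotal exactly when flipping it flips the output, which for this read-once formula occurs iff the other $w-1$ variables of tribe $a$ are all $1$ and every one of the remaining $t-1$ tribes evaluates to $0$; by independence this probability is $2^{-(w-1)}(1-2^{-w})^{t-1}$. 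Substituting the balance estimate $(1-2^{-w})^{t-1}=\tfrac12(1\pm o(1))$ and the scale $2^{-w}=\Theta(\tfrac{\log\ell}{\ell})$ from the previous step gives $\Cov[\bx_i,\tribes_\ell]=\Theta(2^{-w})=O\!\big(\tfrac{\log\ell}{\ell}\big)$, as claimed. Thus the only genuinely quantitative ingredient is the balance calibration, from which the correlation bound follows immediately.
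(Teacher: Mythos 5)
Your property 1 argument and your property 3 argument are correct and are exactly the standard route; note that the paper offers no proof of its own here, only the citation to O'Donnell \S 4.2, which argues the same way (covariance $=\frac14\Pr[\bx_i \text{ pivotal}]$ for monotone functions, pivotality probability $2^{-(w-1)}(1-2^{-w})^{t-1}$ by block independence). One simplification you missed: property 3 is only an upper bound, and since $(1-2^{-w})^{t-1}\le 1$, it already follows from the coarse scale estimate $2^{-w}=\Theta(w/\ell)=\Theta(\log\ell/\ell)$, which your factor-of-constant calibration does give; the delicate balance estimate is not needed there at all. So the only load-bearing gap is in property 2, and you flagged it at exactly the right spot.

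That gap, however, is worse than a deferred computation: the sharp calibration $t\,2^{-w}=\ln 2\pm o(1)$ \emph{cannot} be extracted from the parameterization you (following the paper's definition) are using, and "following O'Donnell" will not close it as stated, because O'Donnell calibrates a different parameter. In his treatment the width $w$ is fixed and the number of tribes $s$ is chosen as the largest integer with $(1-2^{-w})^{s}\ge 1/2$; each increment of $s$ multiplies the all-tribes-fail probability by $1-2^{-w}=1-o(1)$, so this fine granularity lands $\Pr[\tribes=1]$ within $O(2^{-w})$ of $\frac12$. By contrast, choosing the largest integer $w$ with $(1-2^{-w})^{\ell/w}\le\frac12$ tunes a parameter with factor-$2$ granularity: as you computed yourself, $\phi(w)=(\ell/w)2^{-w}$ satisfies $\phi(w+1)\approx\phi(w)/2$, so the two-sided constraint pins $\phi(w)$ only in $[\ln 2,\,2\ln 2]$ up to $1\pm o(1)$ factors, and this is tight. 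Concretely, for $w=20$ and $\ell\approx 1.3\cdot 20\cdot 2^{20}$ one checks $\phi(20)\approx 1.3\ge\ln 2$ while $\phi(21)\approx 0.62<\ln 2$, so $w=20$ is selected and $\Pr[\tribes_\ell=1]\approx 1-e^{-1.3}\approx 0.73$; such $\ell$ exist at every scale, so near-balance genuinely fails along this parameterization. The fact is true, and provable by your own outline, once one adopts O'Donnell's convention (fix $w$, calibrate the number of tribes $t$, set $\ell=wt$ with any leftover coordinates irrelevant)---which is evidently what the paper's citation intends---rather than hoping the constant-factor control sharpens to $1\pm o(1)$.
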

Indeed, the famous KKL inequality implies that any function with the first and second property has a variable with correlation at least $\Omega(\log \ell/\ell)$ \citep{KKL88}. Our construction uses $\tribes$ exactly because it has the minimum correlations among functions with the above properties (up to constants). In contrast, we use Majority because its correlations are as \emph{large} as possible, which will ``trick" $\Top{k}$ into building a bad tree.

With the above definitions in-hand, we are able to provide proofs of the following two lemmas:

\begin{proof}[Proof of \Cref{lem:top-K-succeed-monotone}]
    This proof is very similar to that of \Cref{lem:top-k-succeed-nonmonotone}: Once again, we observe the tree computing $(x \mapsto \tribes_{h}(x^{(1)}))$ has at least $1 - \eps$ accuracy with respect to $\mathcal{D}_{h, K}$. By \Cref{lem:best-in-search-space}, it is sufficient to prove such a tree is in the search space.
    
    By \Cref{fact:complete-tree}, any non-redundant complete tree of depth $h$ that only queries the first $h$ coordinates of its input will compute the function $(x \mapsto \tribes_{h}(x^{(1)}))$ whenever the leaves are appropriately labeled. Therefore, we only need to prove such a tree is in the search space $\mathcal{T}_{K,h,\mathcal{D}}$. There are only $K - 1$ coordinates that are \emph{not} one of the first $h$ corresponding to $x^{(1)}$. Therefore, within any non-redundant set of $K$ coordinates, at least one must be a non-redundant coordinate from the first $h$. This implies one of the desired trees is in the search space.
\end{proof}

\begin{proof}[Proof of \Cref{lem:top-k-fail-monotone}]
    Let $T$ be the tree returned by $\Top{k}$. Consider any root-to-leaf path of $T$ that does \emph{not} query any of the first $h$ coordinates (those within $x^{(1)}$). Recall that, with probability $(1 - \eps)$, the label is given by $\tribes_h(x^{(1)})$. On this path, the label of $T$ does not depend on any of the coordinates within $x^{(1)}$. Therefore,
    \begin{align*}
        \Prx_{(\bx,\by) \sim \mathcal{D}_{h,K}}[&T(\bx) = \by \mid \bx \text{ follows this path}] \\
        &= (1 - \eps)\cdot\Prx_{(\bx,\by) \sim \mathcal{D}_{h,K}}[T(\bx) = \tribes_h(\bx^{(1)}) \mid \bx \text{ follows this path}]  \\
        &\quad+ \eps\cdot \Prx_{(\bx,\by) \sim \mathcal{D}_{h,K}}[T(\bx) = \maj_K(\bx^{(2)}) \mid \bx \text{ follows this path}] \\
        & \leq (1-\eps) \cdot \left(\frac{1}{2} + o(1)\right) + \eps \cdot 1 \leq \frac{1 + \eps}{2} + o(1)
    \end{align*}
    where the last line follows because $\tribes_h$ is nearly balanced (\Cref{fact:tribes-properties}). As the distribution over $\bx$ is uniform, each leaf is equally likely. Therefore, if only $p$-fraction of root-to-leaf paths of $T$ query at least one of the first $h$ coordinates, then,
    \begin{equation*}
         \Prx_{(\bx,\by) \sim \mathcal{D}_{h,K}}[T(\bx) = \by] \leq (1- p)\cdot\left( \frac{1 + \eps}{2} + o(1)\right) + p \cdot 1 \leq \frac{1}{2} + \frac{p}{2} + \frac{\eps}{2} + o(1)
    \end{equation*}
    Our goal is to prove the tree returned by $\Top{k}$ achieves at most $\frac{1}{2} + \eps$ accuracy. Therefore, it is enough to prove that $p = o(1)$. Indeed, we will prove that $p \leq O(K^{-2})$.
    
    Here, we apply \citep[Lemma 7.4]{BLT-ITCS}, which was used to show that $\Top{1}$ fails to build a high accuracy tree. They used a different data distribution, but that particular Lemma still applies to our setting. They prove that a random root-to-leaf path of $T$ satisfies the following with probability at least $1 - O(K^{-2})$: If the length of this path is less than $O(K/\log K)$, at any point along that path, all coordinates within $x^{(2)}$ that have not already been queried have correlation at least $\frac{1}{100\sqrt{k}}$.
    
    That Lemma will be useful for proving $\Top{k}$ fails with the following parameter choices.
    \begin{enumerate}
        \item By setting $K \geq \Omega(h \log h)$, we can ensure all root-to-leaf paths in $T$ have length at most $O(K/\log K)$, so \citep[Lemma 7.4]{BLT-ITCS} applies. 
        \item By setting $K \leq O(h^2 / (\log h)^2)$, we can ensure that all the coordinates within $x^{(1)}$ have correlation less than $\frac{1}{100 \sqrt{k}}$ (\Cref{fact:tribes-properties}). This means that all non-redundant coordinates within $x^{(2)}$ have more correlation than those within $x^{(1)}$.
        \item By setting $k \leq K - h$, we ensure at all nodes along every path, there are at least $k$ coordinates within the last $K-1$ coordinates (those corresponding to $x^{(2)}$), that have not already been queried. With probability at least $1 - O(K^{-2})$ over a random path, those all have more correlation than all coordinates within $x^{(1)}$, so $\Top{k}$ won't query any of the $h$ coordinates within $x^{(1)}$.
    \end{enumerate}
    
    We conclude that, with probability at least $1 - O(K^{-2})$ over a random path in $T$, that path does not query any of the first $h$ variables. As a result, the accuracy of $T$ is at most $\frac{1 + \eps}{2} + o(1) \leq \frac{1}{2} + \eps$.
\end{proof}

%\newpage

\section{Details about datasets used in \Cref{sec:experiments}}
\label{appsec:dataset-details}

\begin{table}[H]
\centering
\begin{tabular}{ |c|c|c|c|c|c| } 
\hline
Name & Type & Size (\#train/\#test) & \#feats &\#binary feats &\#classes \\
\hline
\href{https://www.openml.org/d/40668}{connect-4} & C & 67557 (54045/13512) & 42 & 126 & 3 \\
\href{https://www.openml.org/d/26}{nursery} & C & 12960 (10368/2592) & 8 & 27 & 5 \\
\href{https://www.openml.org/d/6}{letter-recognition} & C & 19999 (15999/4000) & 16 & 256 & 26 \\
\href{https://www.openml.org/d/21}{car} & C & 1728 (1382/346) & 6 & 21 & 4 \\
\href{https://www.openml.org/d/3}{kr-vs-kp} & C & 3196 (2556/640) & 36 & 73 & 2 \\
\href{https://archive.ics.uci.edu/ml/datasets/HIV-1+protease+cleavage}{hiv-1-protease} & C & 6590 (5272/1318) & 8 & 160 & 2 \\
\href{https://www.openml.org/d/46}{molecular-biology-splice} & C & 3190 (2552/638) & 60 & 287 & 3\\
%\href{https://www.openml.org/d/24}{mushroom} & C & 8124 (6499/1625) & 22 & 117 & 2 \\
\href{https://www.openml.org/d/333}{monks-1} & C & 556 (444/112) & 6 & 17 & 2 \\
\href{https://www.openml.org/d/329}{hayes-roth} & C & 160 (128/32) & 4 & 15 & 3 \\
\href{https://www.openml.org/d/50}{tic-tac-toe} & C & 958 (766/192) & 9 & 27 & 2 \\
\href{https://www.openml.org/d/1459}{artificial-characters} & N & 10218 (8174/2044) & 7 & 91 & 10 \\
\href{https://www.openml.org/d/1120}{telescope} & N & 19020 (15216/3804) & 10 & 100 & 2 \\
\href{https://www.openml.org/d/44}{spambase} & N & 4601 (3680/921) & 57 & 57 & 2 \\
\href{https://archive.ics.uci.edu/ml/datasets/Dry+Bean+Dataset}{dry-bean} & N & 13611 (10888/2723) & 16 & 96 & 7 \\
\href{https://archive.ics.uci.edu/ml/datasets/Room+Occupancy+Estimation}{occupancy-estimation} & N & 10129 (8103/2026) & 16 & 86 & 4 \\
\href{https://archive.ics.uci.edu/ml/datasets/MiniBooNE+particle+identification}{miniboone} & N & 130064 (104051/26013) & 50 & 100 & 2 \\
\href{https://archive.ics.uci.edu/ml/datasets/dataset+for+sensorless+drive+diagnosis}{sensorless-drive-diagnosis} & N & 58509 (46807/11702) & 48 & 96 & 11 \\
\href{https://archive.ics.uci.edu/ml/datasets/First-order+theorem+proving}{ml-prove} & N & 6118 (4588/1530) & 51 & 51 & 6 \\
\href{https://archive.ics.uci.edu/ml/datasets/Avila}{avila} & N & 20867 (10430/10437) & 10 & 100 & 12 \\
\href{https://archive.ics.uci.edu/ml/datasets/Taiwanese+Bankruptcy+Prediction}{taiwanese-bankruptcy} & N & 6819 (5455/1364) & 95 & 95 & 2 \\
\href{https://archive.ics.uci.edu/ml/datasets/default+of+credit+card+clients}{credit-card} & N & 30000 (24000/6000) & 23 & 88 & 2 \\
\href{https://archive.ics.uci.edu/ml/datasets/Electrical+Grid+Stability+Simulated+Data+}{electrical-grid-stability} & N & 10000 (8000/2000) & 13 & 91 & 2 \\
\href{https://github.com/Jimmy-Lin/TreeBenchmark/blob/master/datasets/fico/data.csv}{FICO} & N & 1000 (900/100) & 23 & 1407 & 2 \\
\hline
\end{tabular}
\caption{Dataset characteristics. In the Type column, C stands for Categorial and N stands for Numerical.}
\label{table:dataset-details}
\end{table}

\Cref{table:dataset-details} provides complete details regarding all the datasets we used in our experiments. For datasets that do not provide an explicit train/test split, we randomly compute ten 80:20 splits, and average our results over these splits. The column \#feats has the number of raw attributes in each dataset, while the column \#binary feats has the number of features we obtain after converting these raw attributes to binary-valued attributes. For categorical datasets, we encode a categorical attribute taking on $l$ distinct values to $l$ binary attributes. For numerical datasets, we sort and compute thresholds for each numerical attribute. The number of thresholds is so selected that the total number of binary attributes does not exceed 100.

% \newpage
% \section{Test Accuracy comparison with Random-$k$}
% \label{appsec:comparison-with-random-k}

% \begin{figure}[H]
%     \centering
% 	\includegraphics[width=1\linewidth]{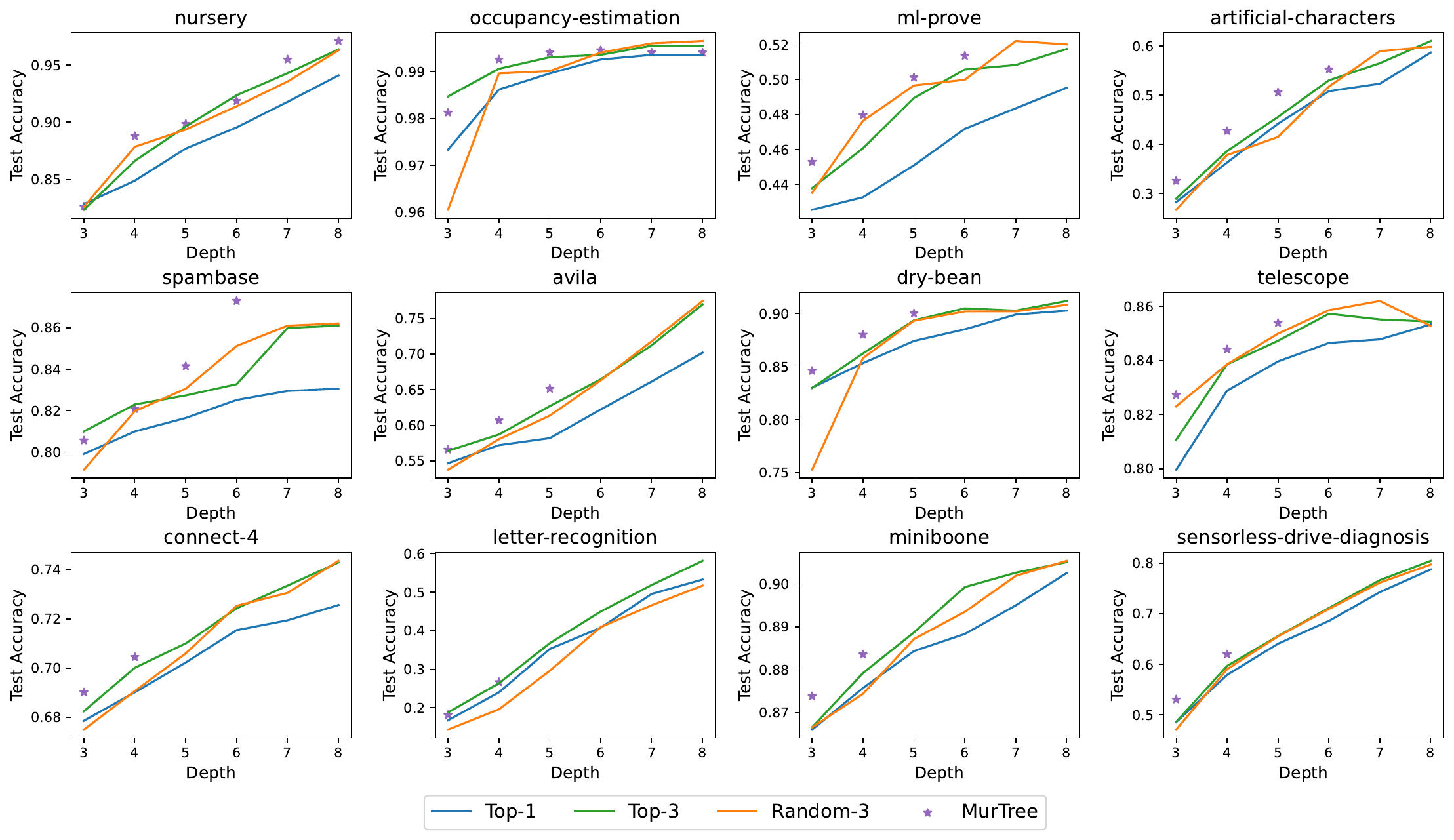}  
% 	\caption{Test accuracy comparison between $\Top{1}$, $\Top{3}$, Random-$3$ and MurTree. We can see that $\Top{3}$ generally does better than Random-$3$ on the larger data sets, which justifies choosing features that maximize purity gain to split on. Interestingly, Random-$3$ outperforms $\Top{1}$ in certain cases.}
% 	\label{fig:randomk-accuracy}
% \end{figure}

%\newpage
\section{Implementation details for the $\topk$ algorithm}
\label{appsec:implementation-details}
Our implementation of Top-$k$ makes use of the DL8.5 algorithm implementation from \cite{aglin2021pydl8}. DL8.5 is an optimal classification tree search algorithm which utilizes caching and branch-and-bound optimization to avoid repeated computation and prune large sections of the search space that would yield suboptimal trees \cite{aglin2020learning}, similar to MurTree \cite{MURTREE}. To get our optimized Top-$k$ algorithm, we modify DL8.5 to only consider the first $k$ feature splits of each recursive state in descending order of information gain and with ties broken by feature index.

\begin{figure}[ht]
%   \captionsetup{width=.9\linewidth}
\begin{tcolorbox}[colback = white,arc=1mm, boxrule=0.25mm]
\vspace{3pt}
$\textnormal{Opt-Top-}k(\mathcal{H},S, h, ub)$:
\begin{itemize}[align=left]
	\item[\textbf{Given:}] A feature scoring function $\mathcal{H}$, a labeled sample set $S$ over $d$ dimensions, depth budget $h$, and upper bound on misclassification error $ub$.
	\item[\textbf{Output:}] Decision tree of depth $h$ that approximately fits $S$. 
\end{itemize}
\begin{enumerate}
	\item If $h = 0$, or if every point in $S$ has the same label, return the constant function with the best accuracy w.r.t.~$S$.
         \item \textbf{\textcolor{blue}{If $(S, d)$ is in the cache:}}
            \begin{enumerate}
                \item \textbf{\textcolor{blue}{Let $T_c$ and $ub_c$ be the cached tree and upper bound.}}
                \item \textbf{\textcolor{blue}{If $T_c \neq \textsc{NO-TREE}$  then return $T_c$.}}
                \item \textbf{\textcolor{blue}{If $T_c = \textsc{NO-TREE}$ and $ub \leq ub_c$ then return \textsc{NO-TREE}.}}
            \end{enumerate}
        \item \textbf{\textcolor{blue}{Let $T^*$ be \textsc{NO-TREE}.}}
        \item \textbf{\textcolor{blue}{Let $b^*$ be $ub + 1$.}}
	\item Let $\mathcal{I}\sse [d]$ be the set of $k$ coordinates maximizing $\mathcal{H}(S, i)$.
	\item For each $i \in \mathcal{I}$:
        \begin{enumerate}
            \item Let $T_i$ be the tree with 
                \begin{align*} 
            	\text{Root} &= x_i \\ 
            	\text{Left subtree} &= \textnormal{Opt-Top-}k(\mathcal{H},S_{x_i = 0},h - 1,b^* - 1)
        	\end{align*}
            \item \textbf{\textcolor{blue}{If the left subtree is \textsc{NO-TREE} then continue.}}
            \item \textbf{\textcolor{blue}{Let $b_L$ be the misclassification error of the left subtree w.r.t. $S_{x_i = 0}$.}}
            \item \textbf{\textcolor{blue}{If $b_L \leq b^*$}} we define the right subtree of $T_i$
                \begin{align*}
                    \text{Right subtree} &= \textnormal{Opt-Top-}k(\mathcal{H},S_{x_i = 1},h - 1,b^* - 1 - b_L)
                \end{align*}
            \item \textbf{\textcolor{blue}{If the right subtree is \textsc{NO-TREE} then continue.}}
            \item \textbf{\textcolor{blue}{Let $b_R$ be the misclassification error of the right subtree w.r.t. $S_{x_i = 1}$.}}
            \item \textbf{\textcolor{blue}{If $b_L + b_R < b^*$:}}
                \begin{enumerate}
                    \item \textbf{\textcolor{blue}{Let $T^* = T_i$.}}
                    \item \textbf{\textcolor{blue}{Let $b^* = b_L + b_R$.}}
                \end{enumerate}
            \item \textbf{\textcolor{blue}{If $b_L + b_R = 0$ then break.}}
        \end{enumerate}
	\item \textbf{\textcolor{blue}{Add $(S, d)$ to the cache with value $(T^*, ub)$.}}
	\item Return $T^*$.
\end{enumerate}
\end{tcolorbox}
\caption{The optimized Top-$k$ algorithm is equivalent to the Top-$k$ algorithm described in \Cref{fig:pseudocode} but with caching and pruning optimizations that make it significantly faster in practice. These changes are bolded and highlighted in blue.
}
\label{fig:pseudocode-optimized}
\end{figure}

There were two other optimizations made by the DL8.5 algorithm implementation that would have led to different results. These optimizations are (1) fast computation of depth-two optimal trees and (2) similarity-based lower bounding. These optimizations were disabled.

%\newpage
\section{Training time comparison}
\label{appsec:training-time-comparison-top-1}

\begin{figure}[H]
	\begin{subfigure}{.24\textwidth}
    	\centering
    	\includegraphics[width=1\linewidth]{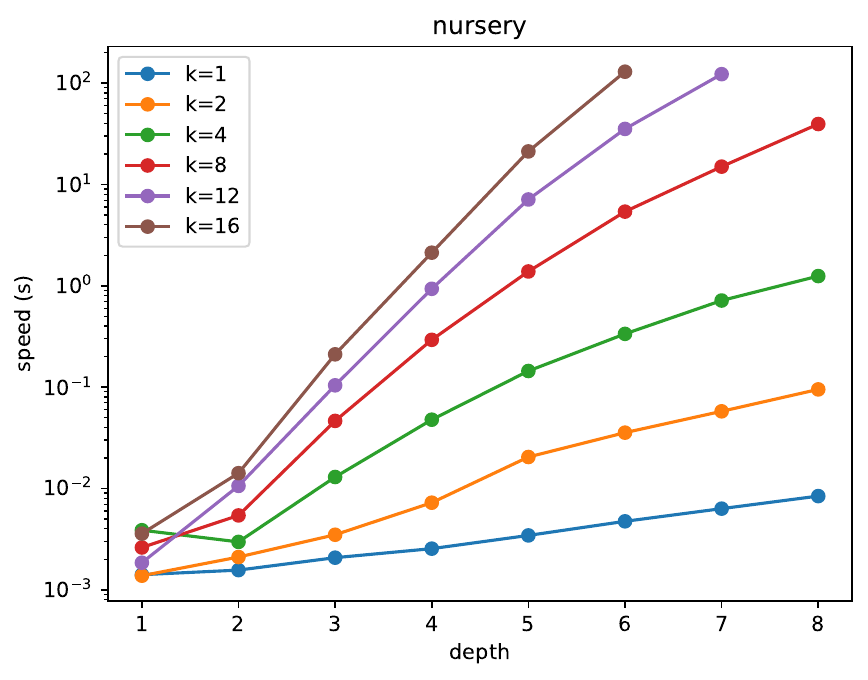}  
        \end{subfigure}
	\begin{subfigure}{.24\textwidth}
    	\centering
    	\includegraphics[width=1\linewidth]{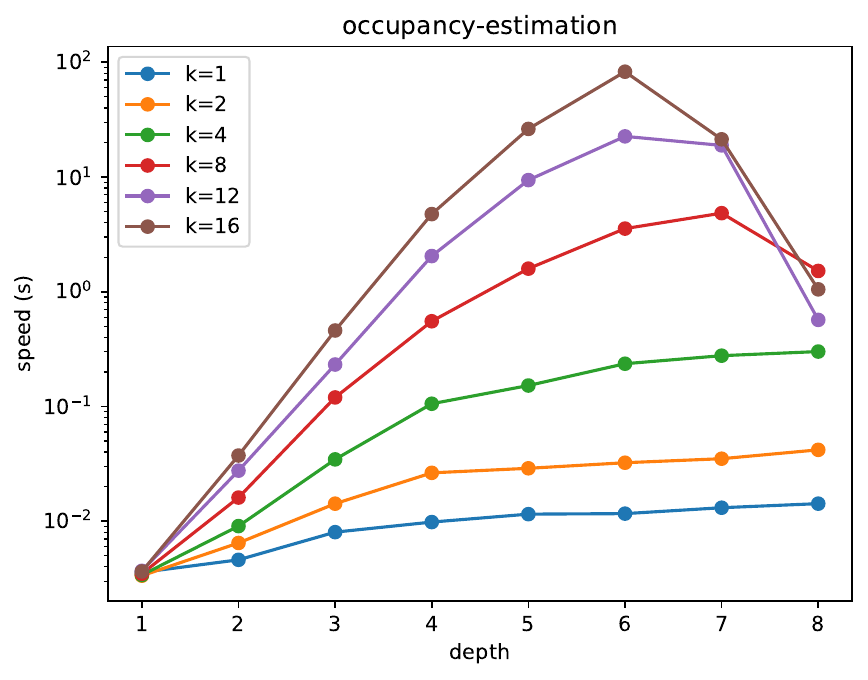}  
        \end{subfigure}
	\begin{subfigure}{.24\textwidth}
    	\centering
    	\includegraphics[width=1\linewidth]{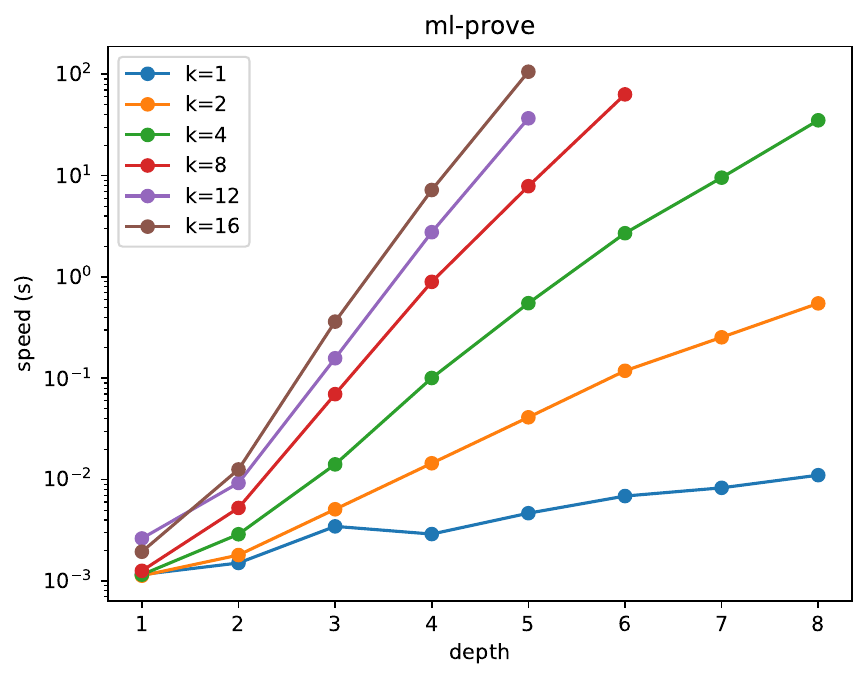}  
        \end{subfigure}
	\begin{subfigure}{.24\textwidth}
    	\centering
    	\includegraphics[width=1\linewidth]{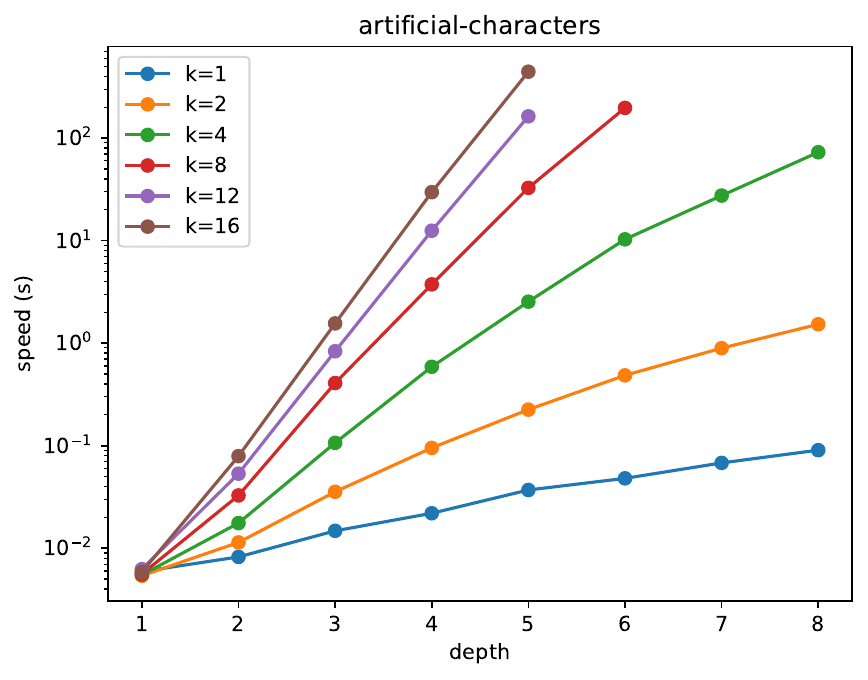}  
        \end{subfigure}
        \newline
	\begin{subfigure}{.24\textwidth}
    	\centering
    	\includegraphics[width=1\linewidth]{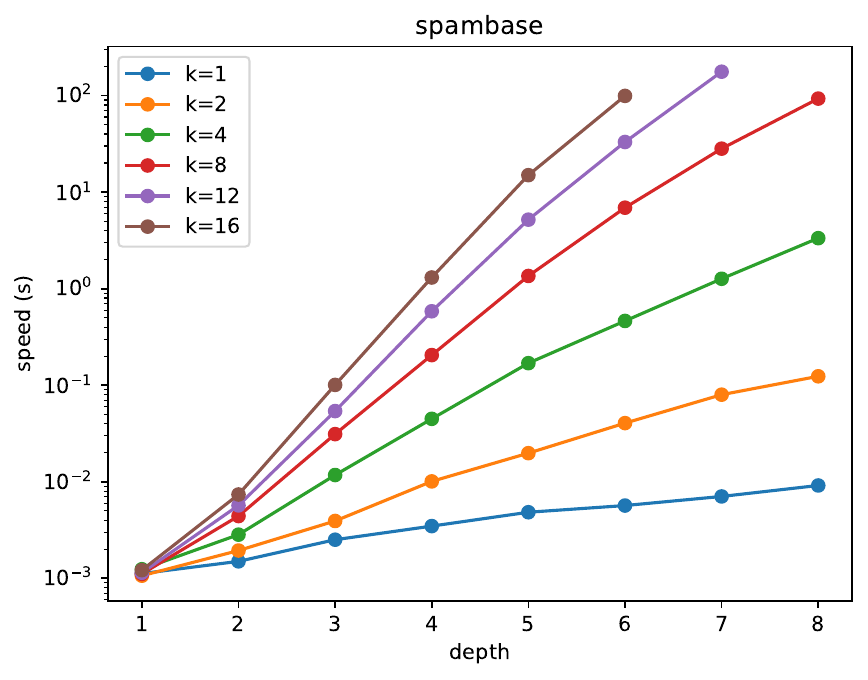}  
        \end{subfigure}
	\begin{subfigure}{.24\textwidth}
    	\centering
    	\includegraphics[width=1\linewidth]{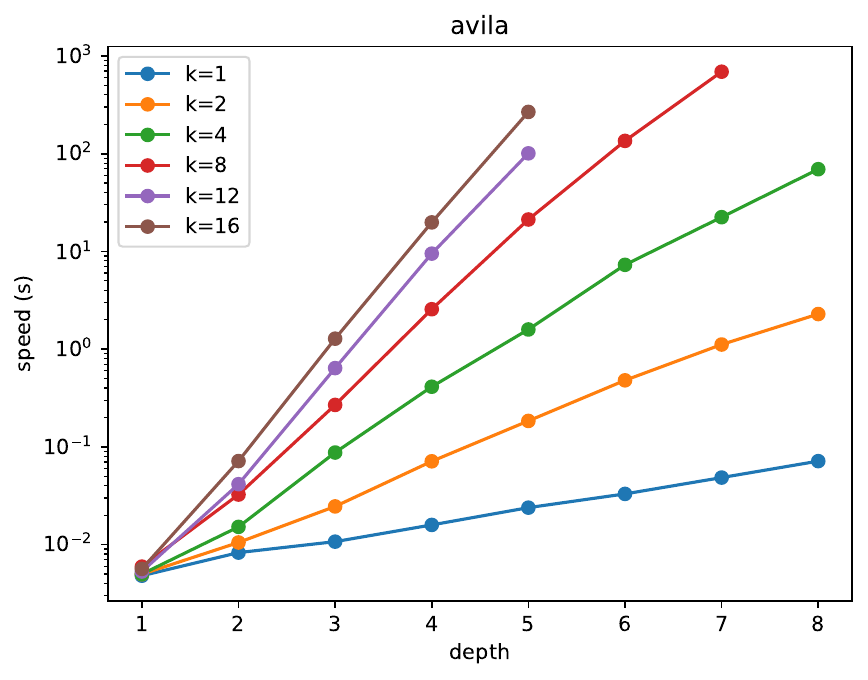}  
        \end{subfigure}
	\begin{subfigure}{.24\textwidth}
    	\centering
    	\includegraphics[width=1\linewidth]{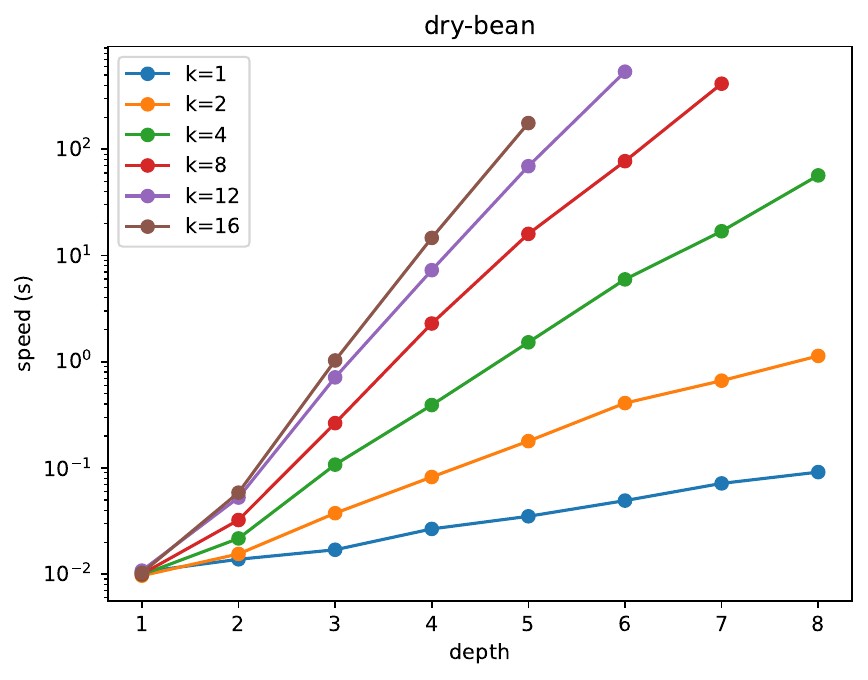}  
        \end{subfigure}
	\begin{subfigure}{.24\textwidth}
    	\centering
    	\includegraphics[width=1\linewidth]{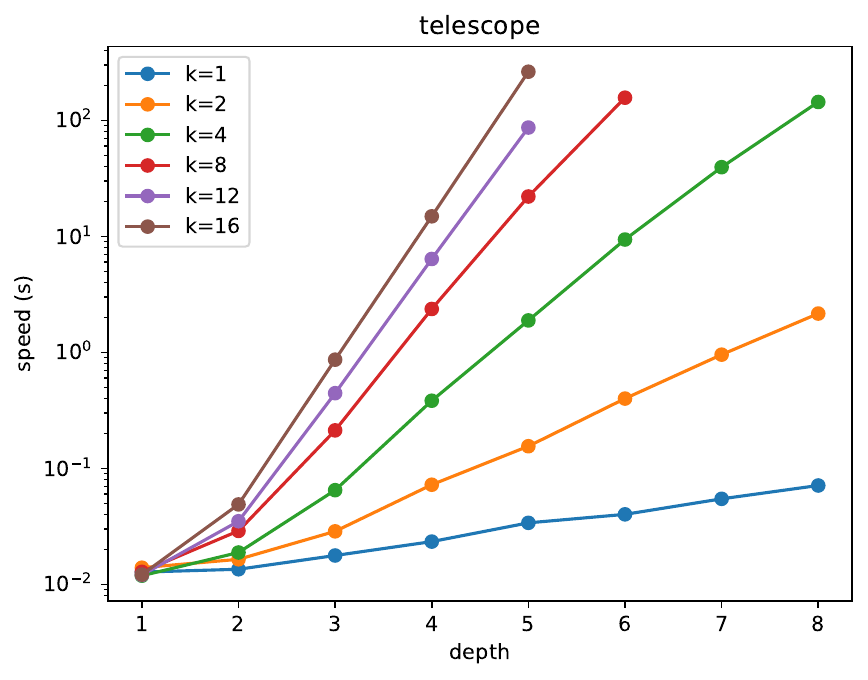}  
        \end{subfigure}
        \newline
	\begin{subfigure}{.24\textwidth}
    	\centering
    	\includegraphics[width=1\linewidth]{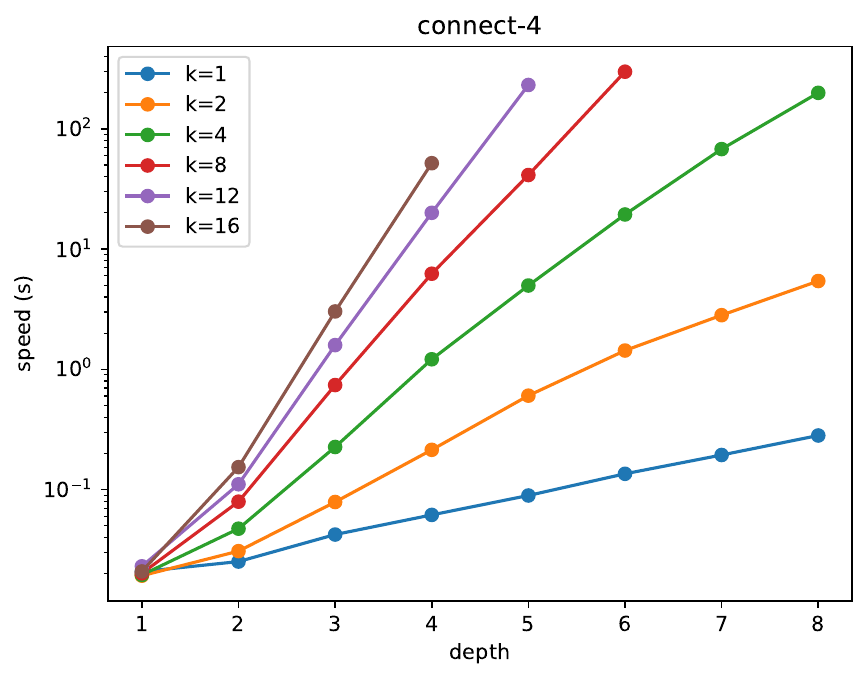}  
        \end{subfigure}
	\begin{subfigure}{.24\textwidth}
    	\centering
    	\includegraphics[width=1\linewidth]{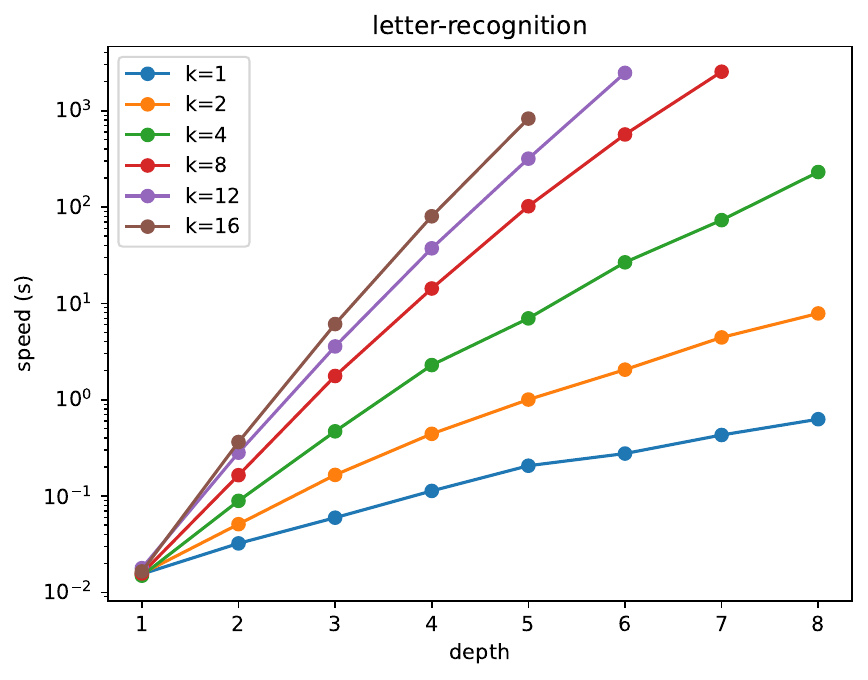}  
        \end{subfigure}
	\begin{subfigure}{.24\textwidth}
    	\centering
    	\includegraphics[width=1\linewidth]{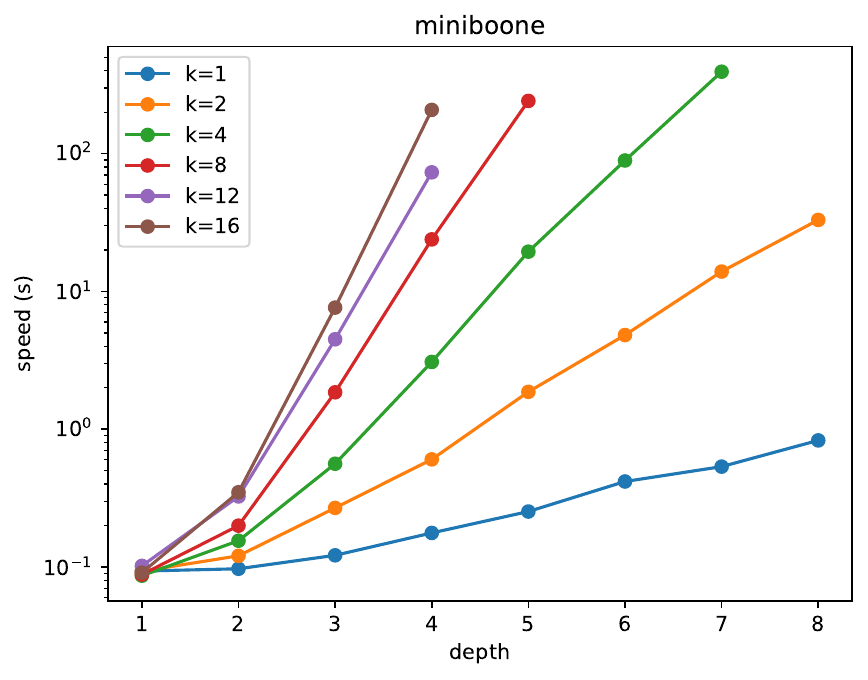}  
        \end{subfigure}
	\begin{subfigure}{.24\textwidth}
    	\centering
    	\includegraphics[width=1\linewidth]{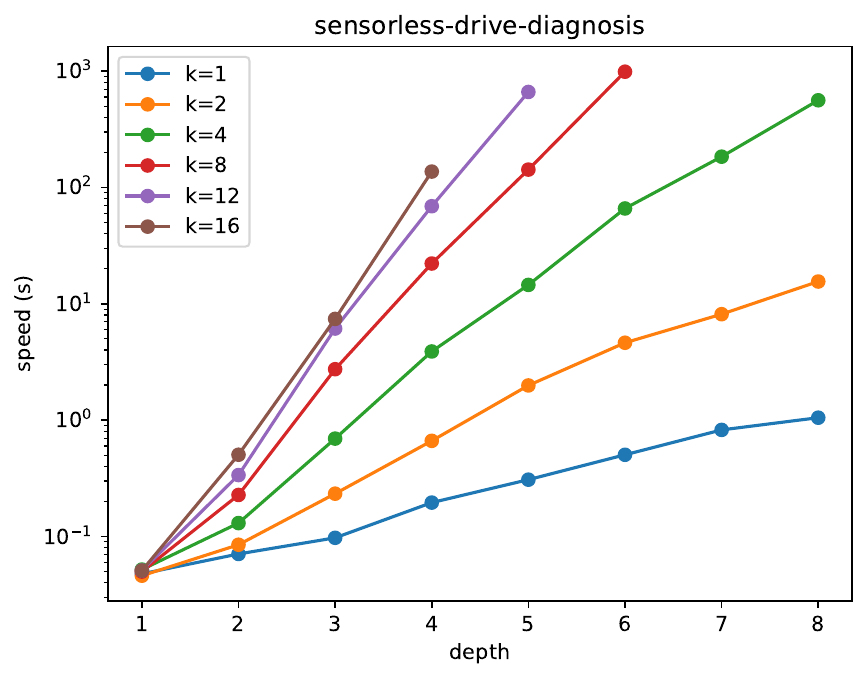}  
        \end{subfigure}
	\caption{Training time comparison between $\Top{1}$ and $\topk$. We can see that the blowup in training time when compared to $\Top{1}$ is relatively mild. %\textcolor{red}{\textbf{colin: }\textit{can we say exponential (and sometimes sub-exponential), as expected}?}.
    In particular, for $k=2$, we are able to go all the way up until depth-8 trees within 1 second in almost all cases. Even $k=4,8$ finishes execution for depth-5 trees within $\approx$ 20 seconds for majority of the datasets. Interestingly, in the case of occupancy-estimation, we can see that the training times get \textit{faster} at the larger depths. This is an artefact of the optimized branch-and-bound implementation of DL8.5, which stops branching once it discovers a subtree with no errors. We expect these perfect subtrees to become more prevalent when considering higher depth trees and when there are fewer points to be classified. }
	\label{fig:time-topk-vs-top1}
\end{figure}

% \begin{figure}[H]
%     \centering
% 	\includegraphics[width=1\linewidth]{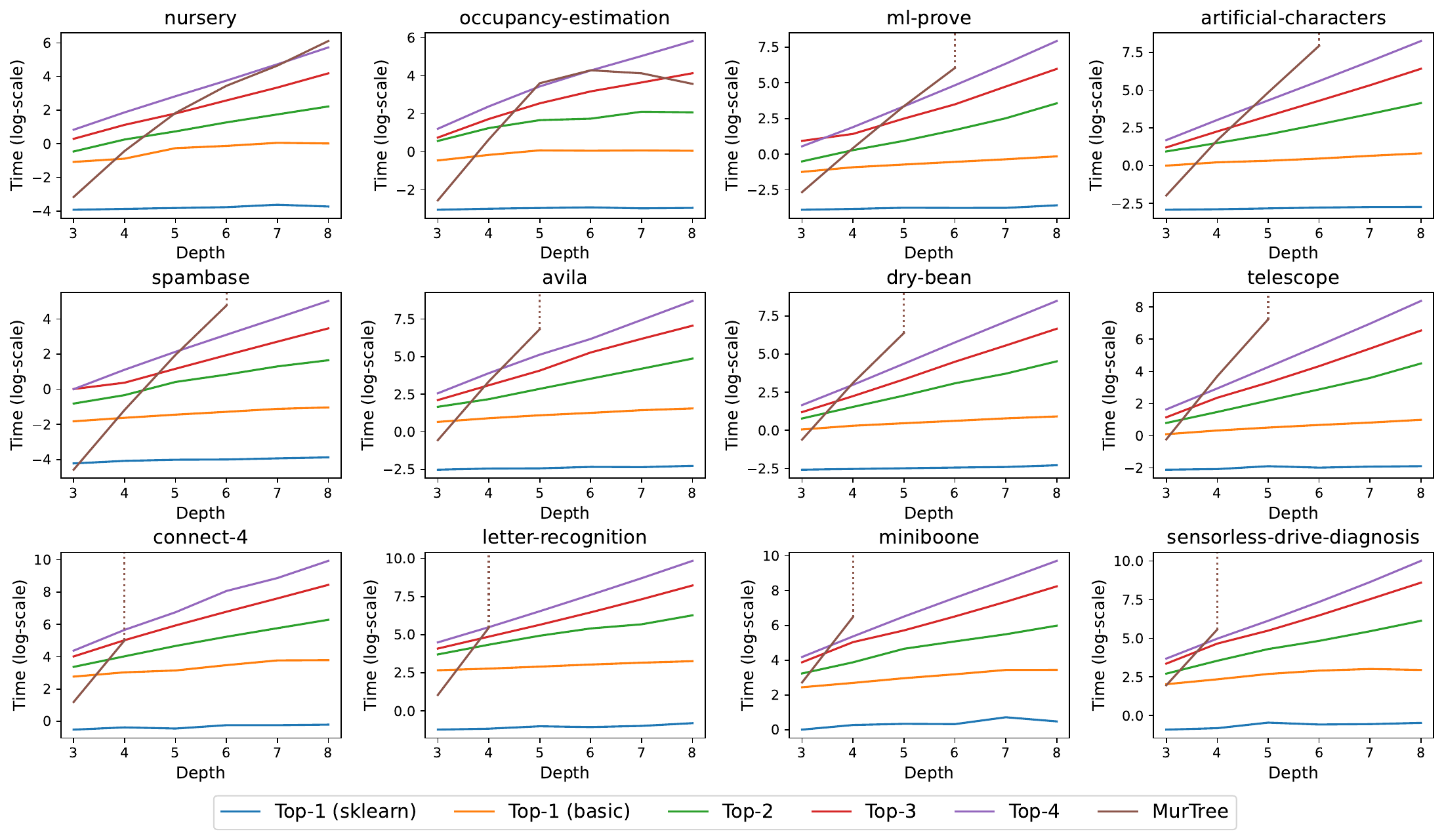}  
% 	\caption{Training time comparison between $\Top{1}$, $\topk$ and MurTree. We can see that the plots for $\topk$ are all straight lines with increasing slope, as would be expected from \Cref{claim:running-time}. The dashed vertical line for MurTree indicates a segmentation fault before completing execution of building the decision tree.}
% 	\label{fig:time-topk-vs-top1}
% \end{figure}

%\newpage
\section{Accuracy comparison with $\Top{1}$ -- further plots}
\label{appsec:accuracy-further-plots}

\begin{figure}[H]
    \centering
	\includegraphics[width=1\linewidth]{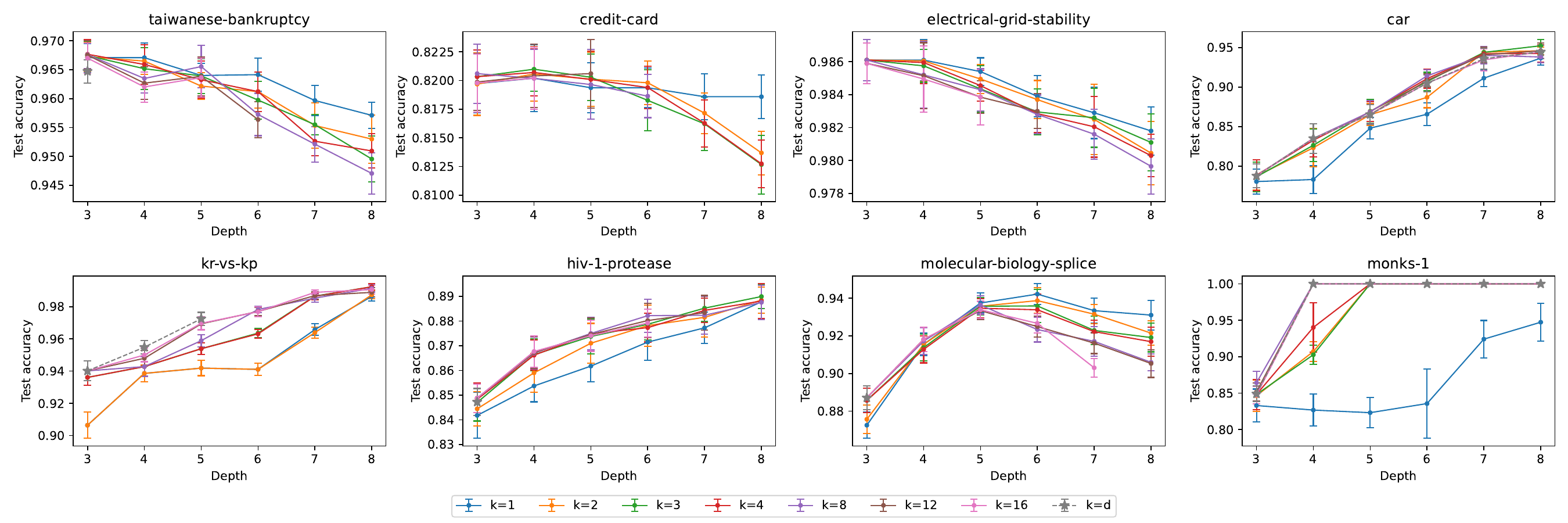}  
	\caption{Test accuracy comparison between $\Top{1}$ and $\topk$.}
	\label{fig:app-acc-topk-vs-top1}
\end{figure}

We provide plots from our experiments on a further few datasets comparing the test accuracy of $\topk$ and $\Top{1}$ in \Cref{fig:app-acc-topk-vs-top1}. In the case of taiwanese-bankruptcy, credit-card and electrical-grid-stability, we can observe that $\Top{1}$ is outperforming $\topk$. However, we believe that this is because the learning problem in this regime is extremely susceptible to overfitting. In particular, we can see that $\Top{1}$ is itself not consistently improving with increasing depth. Concretely, increasing depth beyond 3 is already causing $\Top{1}$ to overfit, and hence we would expect $\topk$ to suffer from overfitting even more. 
%Furthermore, we can see that the gradation in the y-axis is very small, in that the accuracy numbers are very close to one another. 
In the case of the remaining datasets (which all happen to be categorical), while the numbers might not be monotonically getting better with increasing $k$, we can still observe that there is always some value of $k > 1$ which is outperforming $k=1$ (except for molecular-biology-splice, for which this is still the case till depth 6). This lends further support to our proposition of incorporating $k$ as an additional hyperparameter to tune while training decision trees greedily.

\end{document}